\DeclareMathOperator{\R}{\mathbb{R}}
\newcommand{\bie}{\textsc{WI-UCB}}
\newcommand{\bieboth}{\textsc{WI/WO-UCB}}
\newcommand{\ucbr}{\textsc{UCB-Revisited}}
\newcommand{\ucb}{\textsc{UCB1}}
\newcommand{\ucbtwo}{\textsc{UCB2}}
\newcommand{\moss}{\textsc{MOSS}}
\newcommand{\aae}{\textsc{SE}}
\newcommand{\narms}{K}
\newcommand{\arms}{\mathcal{K}}
\newcommand{\indicator}{\mathbb{I}}
\newcommand{\armsm}{\mathcal{K}_m}
\newcommand{\armsmp}{\mathcal{K}_{m+1}}
\newcommand{\estmeanmj}{\overline{X}_{m,j}}
\newcommand{\expd}{\mathbb{E}[D]}
\newcommand{\expz}{\mathbb{E}[Z]}
\newcommand{\expdj}{\mathbb{E}[D_j]}
\newcommand{\expzj}{\mathbb{E}[Z_j]}
\providecommand{\customgenericname}{}
\newcommand{\newcustomtheorem}[2]{%
  \newenvironment{#1}[1]
  {%
   \renewcommand\customgenericname{#2}%
   \renewcommand\theinnercustomgeneric{##1}%
   \innercustomgeneric
  }
  {\endinnercustomgeneric}
}
\newtheorem{lemma}{Lemma}
\newtheorem{theorem}{Theorem}
\newtheorem{corollary}{Corollary}
\newtheorem{definition}{Definition}
\newtheorem{remark}{Remark}
\title{Learning by Repetition: Stochastic Multi-armed Bandits under Priming Effect}
\author{ {\bf Priyank~Agrawal }\\
University of Illinois at Urbana-Champaign\\
\href{mailto:priyank4@illinois.edu}{priyank4@illinois.edu}\\
\And
{\bf Theja~Tulabandhula}  \\
University of Illinois at Chicago\\
\href{mailto:theja@uic.edu}{theja@uic.edu}
}
\begin{document}

\maketitle

\begin{abstract}
We study the effect of persistence of engagement on learning in a stochastic multi-armed bandit setting. In advertising and recommendation systems, repetition effect includes a wear-in period, where the user's propensity to reward the platform via a click or purchase depends on how frequently they see the recommendation in the recent past. It also includes a counteracting wear-out period, where the user's propensity to respond positively is dampened if the recommendation was shown too many times recently. Priming effect can be naturally modelled as a temporal constraint on the strategy space, since the reward for the current action depends on historical actions taken by the platform. We provide novel algorithms that achieves sublinear regret in time and the relevant wear-in/wear-out parameters. The effect of priming on the regret upper bound is also additive, and we get back a guarantee that matches popular algorithms such as the UCB1 and Thompson sampling when there is no priming effect. Our work complements recent work on modeling time varying rewards, delays and corruptions in bandits, and extends the usage of rich behavior models in sequential decision making settings.

\end{abstract}

\section{INTRODUCTION}\label{sec:introduction}

In advertising applications and recommendation systems, there has been a large body of work that models consumer behavior~\citep{hawkins2009consumer,solomon2014consumer}. One such effect that is relatively well studied is the priming/repetition effect. Under the priming effect, an advertiser's payoff (for instance, click through rate) depends on how frequently they have presented the same ad to the same audience in the recent past. If the advertiser presents a specific ad sporadically, then the click through rate is much lower, even if this ad is the best among a collection of ads. Priming can be broken down into two sub-effiects; \emph{wear-in} and \emph{wear-out}~\citep{pechmann1988advertising}. Wear-in effect leads to a user not responding to an ad if it has not been shown enough number of times in the recent past. Whereas, the wear-out effect leads to a user not responding (or becoming insensitive) to an ad if it has been shown too many times in the recent past. Different ads may need different levels of repetition to obtain payoffs, and all the relevant parameters that model the priming effect may not be known a priori to the advertiser~\citep{ma2016user}. 

This phenomenon also translates to recommendations, such as for products and movies, where repeated display of item(s) can cause positive reinforcement to build over time, culminating in a conversion. It can also lead to fatigue and therefore no conversion. Since these conversion events depend on the past recommendations, they interfere with learning the true underlying (mean) payoffs and demands of different recommendations. 
Motivated by the above discussion, we define a new class of problems, which we call \emph{bandit learning under priming effect}, to address the agent's need for repetitions. The amount by which the agent (say an ad platform or a recommendation system) needs to replay an arm depends on the degree of priming effect (see Section~\ref{sec:problem_definition} for a formal treatment). In essence, the platform's current rewards are functions of its previous actions. A diagram illustrating this is shown in Figure~\ref{fig:1}.  Our model and solution is one among a growing literature~\cite{kveton2015cascading,den2017dynamic,shah2018bandit,wang2019thompson} that focuses on combining empirically validated behavioral models with sequential decision making.

\begin{figure}
\centering
\includegraphics[width=.5\textwidth]{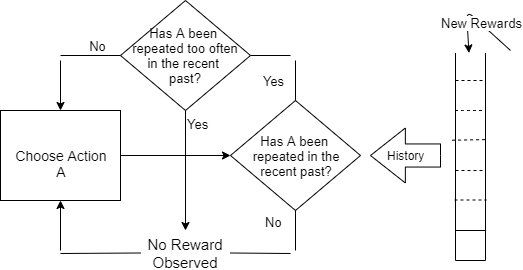}
\caption{Bandit learning under priming effect.}
\end{figure}\label{fig:1}

While our setting can be addressed using Markov Decision Processes/reinforcement learning (RL) techniques, we choose to use the framework of multi-armed bandits (MAB) for their simplicity, analytical tractability and relatively tighter regret guarantees. The MAB problem, which is a special case of the RL problem, captures exploration-exploitation trade-off in certain sequential decision making settings. 
In a stochastic MAB set-up, reward distributions are associated with each arm of the set of arms $\arms$. When the algorithm plays an arm, it immediately receives a reward with which it can learn, and also suffers a regret, which is the difference between the obtained reward versus the reward that it could have obtained, had it played the best arm in hindsight.

In the presence of priming effect, popular MAB algorithms such as \ucb~\citep{auer2002finite}, \aae~\citep{even2006action}, \moss{}~\citep{audibert2009minimax} or Thompson Sampling~\citep{chapelle2011empirical} can be ineffective because they cannot directly control the number of times an arm is played in any given time window. For problem instances with non-unique optimal arms, the aforementioned algorithms may in fact switch between these very frequently (see Appendix~\ref{sec:experiments}), potentially causing linear regret. Even when the mean rewards are not close, initial exploration will yield no rewards due to wear-in effects, hampering learning and hence, subsequent exploitation. To address these issues, we develop \bie and \bieboth{}, which expand on the phase-based algorithmic template~\citep{auer2010ucb} to learn in the presence of priming effect. 

The consequence of priming effect considered here is closely related to the recent works in corruption~\citep{lykouris2018stochastic} and delay~\citep{pike2018bandits}  in the reward accrual process. Unlike these settings, priming effect is \textit{endogenous}, and correlates past actions with the current reward. As an unifying view, in all three works one can assume that there is a intermediate function that allows an MAB algorithm to accrue some transformation of the current and past rewards instead of just the current reward. In our case, we accrue rewards that are modulated by the stochastic sequence of actions that our algorithm took previously, which makes learning more challenging as rewards are now policy dependent.


For instance, \cite{lykouris2018stochastic} consider settings with arbitrary exogenous corruptions of rewards, and propose a randomized algorithm that achieves smooth degradation as the corruption level increases. Unlike their setting, priming effect is \textit{endogenous}, and correlates past actions with the current reward.
While the amount of corruption due to priming in our setting would be O($N K\log T$) ($N$ is a instance dependent parameter in our setting, $K$ is the number of arms, and $T$ is the horizon length) if our algorithm is used, we cannot reuse their analytical techniques because of endogeneity.
Impact of delayed rewards on learning has been well-studied recently ~\citep{joulani2013online,perchet2016batched,cesa2018nonstochastic}. In particular, ~\cite{joulani2013online} provide a recipe to use any regular MAB algorithm in this setting and show that delay causes an additive regret penalty. In~\citep{cesa2018nonstochastic} for adversarial bandits and in~\citep{pike2018bandits} for stochastic bandits, the authors provide regret guarantees for a much weaker setting where rewards can get mixed up and may partially accrue over time. That is, components of rewards due to multiple previous actions may appear collectively at some future point. In contrast to the delay effect, which is affecting the future accrual of rewards, the priming/repetition effect can be viewed as being caused by the trajectory of past actions. As a result, we obtain very different regret bounds.

There has been recent parallel work on rotting bandits~\citep{levine2017rotting,seznec2019rotting}, which can be thought of as capturing the wear-out effect via a sequence of reward random variables with decreasing means. While it does not capture the wear-in effect, the dependence of the reward means on the number of times the ad/arm is played is accounted from the start of the horizon rather than an immediate preceding window, which is critical in our applications. In a closely related subsequent work~\citep{pike2019recovering}, the authors make the mean reward of each arm an unknown function of the time since its last play. While this is an interesting structure, it does not again capture repetition effects (wear-in and wear-out) studied here, and the work relies on a very different modeling setup (using Gaussian processes) to obtain regret bounds.

Some prior works have studied bandit settings under rich temporal user behavior models. For instance, \cite{shah2018bandit} study a temporal behavioral effect rooted in microeconomic theory, namely self-reinforcement. In addition to delays and corruptions influencing rewards, works such as ~\citep{xu2018reinforcement} and~\citep{gamarnik2018delay} also consider the impact of reward accrual in the presence of limited memory, which affects learning and regret. Finally, bandits with switching costs~\cite{banks1994switching,dekel2014bandits}  consider penalties for switching arms too often. As we will discuss soon, our algorithms are also candidate solutions to this problem setting by virtue of switching arms rarely. This is because priming effect imposes a hard constraint on switching arms too frequently and too infrequently, which is approximately equivalent to having large switching costs.


\noindent\textbf{Our Results and Techniques:} Owing to the nature of the priming effect, algorithms necessarily have to ensure that the arms still under consideration are played frequently, perhaps in batches. Phase based algorithms form a natural algorithmic template for such a mechanism. This family of algorithms date back to ~\citep{agrawal1988asymptotically}, who considered arm switching costs. Our algorithms, \bie{} and \bieboth, follow this design pattern, wherein the focus is to eliminate arms between stages (for instance, in \bie{}  each arm is played consecutively for multiple rounds between stages). In particular, both \bie{} (wear-in effect setting) and \bieboth{} (wear-in and wear-out effect setting) are based on algorithms such as \ucbr{}~\citep{auer2010ucb} and \ucbtwo{}~\citep{auer2002finite}, and work under the setting when just the expected priming effect parameters are known. We also introduce a key new idea of \emph{compound-arms} in \bieboth, which lets us retain the algorithmic structure described above as well as the corresponding analytical machinery to obtain regret bounds. 

In our analysis, we design martingales on the sequence of the cumulative sums of the accrued reward deviations from their means. Following the techniques of~\citep{pike2018bandits} and~\citep{auer2010ucb}, we use foundational tools, such as the Bernstein inequality for martingales and the Azuma-Hoeffding inequality based Doob's optimal stopping theorem (see Sections~\ref{sec:wear_in} and \ref{sec:wiwo}), to bound the priming effect under a judicious choice of phase lengths, and guarantee fast convergence of the reward estimates with high probability. Our analysis deviates from these previous works in the following ways: (a) our reward random variables are functions of the past history and \textit{policy} dependent, (b) the use of a phase-based strategy is only possible due to the notion of compound-arms (novel to this work and different from the dueling bandit literature~\citep{yue2012k}), and (c) their regret analysis is not directly applicable to our setting.


The key technical challenge in our setting is due to the opposing wear-in and wear-out effects: while wear-in requires frequent repetition to learn, wear-out hampers learning if arms are played too frequently. Assume that an algorithm can accrue a reward for pulling arm $j$ at time $t$ if it has been tried at least $D_{t,j}$  and at most $Z_{t,j}$ times in the past $N$ rounds, where $D_{t,j}$ and $Z_{t,j}$ are unobserved random variables with means $\expdj$ and $\expzj$, and $N$ is a known instance-specific fixed positive integer. Then, the regret upper bounds of our algorithms, shown in Table~\ref{tab:introresults}, depend \emph{sublinearly} and \emph{additively} on the priming effect parameters (for simplicity assume $\expdj = \expd$ for all $j$). To our knowledge, this is the first work that takes into account both wear-in and wear-out effects for advertising and recommendation systems in an online learning scenario.

\begin{table}
    \centering
\resizebox{\columnwidth}{!}{
\begin{tabular}{ |c||c|}
\hline
Algorithm   &   Bound \\
\hline
Lower bound~\citep{lai1985asymptotically}    &   $\mathrm{O}(\sqrt{KT})$\\
\ucb~\citep{auer2002finite}     &   $\mathrm{O}(\sqrt{KT\log T })$   \\
\moss~\citep{audibert2009minimax}    &   $\mathrm{O}(\sqrt{KT})$\\
\bie{ }[\textit{this work}, no priming]    & $\mathrm{O}(\sqrt{KT\log T })$ \\
\bie{ }[\textit{this work}, wear-in only] & $ \mathrm{O}\left(\sqrt{KT\log T} + K\sqrt{\log^2 T \expd}\right)$\\
\bieboth{ }[\textit{this work}, wear-in \& wear-out] & $\mathrm{O}\left(K\sqrt{T\log T} + K^2\sqrt{\log^2 T N\expd}\right)$\\
\hline
\end{tabular}\label{tab:results}
}
\caption{Results summary.}
\label{tab:introresults}
\end{table}

\section{PROBLEM DEFINITION}\label{sec:problem_definition}

There are $\narms > 1$ arms in the set $\arms$ available to the platform (agent/learner), each corresponding to an ad/recommendation. An arm is played by the platform in each of the $T>1$ rounds of interaction with the user (environment).  

\noindent{\textbf{Priming Effect:}} Each arm $j \in \arms$ is associated with a reward distribution $\xi_j$, which has support in $[0,1]$. The mean reward for arm $j$ is $\mu_j$. $\mu^*$ is the maximum of all $\mu_j$ and corresponds to the arm $j^*$. Let $N\,\in\,\mathbb{N}$ denote the number of historical rounds, and let $\xi_j^D$ and $\xi_j^Z$ for each $j \in \arms$, with supports in $\{0,...,a\}$ and $\{b,...,N\}$ respectively, parameterize the priming effect (wear-in and wear-out). In particular, the distributions $\xi_j^D$ and $\xi_j^Z$ (where $0\leq a<b\leq N$ are fixed non-negative integers) are associated with the wear-in and wear-out effects respectively, and characterize the stochastic user who is unknown to the platform a priori. At each round $t$ of interaction between the platform and the stochastic user, the following happens:
\begin{itemize}
    \item The platform selects an arm (e.g., shows an ad) denoted by $J_t$ (say, $J_t=j$).
    \item The user/environment generates a sample $R_{t,j}\,\sim\,\xi_j$, as well as samples from the wear-in and wear-out distributions: $D_{t,j}\,\sim\,\xi_j^D$ and $Z_{t,j}\,\sim\,\xi_j^Z$.
    \item The user returns the following derived sample reward $X_{t,j}$ via a click/purchase given by:
\begin{equation}\label{eq:feedback_general}
    X_{t,j} =  R_{t,j} \indicator\left[ Z_{t,j} \geq f_{t,j}(N) \geq D_{t,j} \right],
\end{equation}
which is the only quantity observed by the platform. Here $\indicator[\,]$ is the indicator function, and $f_{t,j}(N)$ is a \emph{history function} that encapsulates contribution of prior user interactions, i.e., events and outcomes of rounds $\{t-N,...t-1\}$ that capture the priming effect. 
\end{itemize}

We assume that the value of $f_{t,j}(N)$ only depends on the number of times arm $j$ was played by the platform in the past $N$ rounds and is independent of the other choices made, and is non-decreasing in $N$. We also assume that each play of arm $j$ in the relevant history contributes equally to the value of $f_{t,j}(N)$. With these assumptions, we focus our analysis on the following reward accrual model:  
\begin{equation}\label{eq:feedback}
\resizebox{.9\columnwidth}{!} 
{
    $X_{t,j} =  R_{t,j} \indicator\left[ Z_{t,j} \geq\left(\sum_{k= \max(t-N,0)} ^t \indicator[ J_k = j]\right) \geq D_{t,j} \right].$
}
\end{equation}

\noindent\textit{Justifying the reward model:} The reward model, as a product of $R_{t,j}$ and an indicator function of the history of arms played along with the wear-in/wear-out effects, is quite practical while being amenable to analysis, and can be viewed as a stepping stone for more realistic models in the future. In fact, when $\xi_j$ are Bernoulli, as is the case with clicks and checkouts in applications such as e-commerce, there is very little loss in expressivity when using an indicator function as a multiplier versus any other continuous unimodal function of the history. One could imagine applications where a more sensitive function of history\footnote{It is possible to extend our analysis to case when different intervals of the past $N$ rounds have a weighted contribution in the definition of $f_{t,j}(N)$.} (for example, $X_{t,j}$ gradually increases due to wear-in and then decreases due to wear-out) could be relevant, where such an indicator function (even as an approximation) may be oversimplified. In these cases, a different algorithmic approach and analysis will be needed, potentially relying on RL techniques (we are already capturing the impact of past actions on the current reward without relying on RL methodology here). Further note that the priming effect in Equation \ref{eq:feedback_general} entails the necessity of having non-overlapping support for the distributions $\{\xi_j^D\}$ and $\{\xi_j^D\}$ (i.e., $a<b$). If not, there will be problem instances where no rewards would be accrued for any policy.

\noindent\textit{The stochastic nature of user behavior:} We are in a (non-contextual) stochastic bandit setting. As noted earlier, every time an arm $j$ is presented, the user generates the three random variables (and using N and $f_{t,j}(N)$) responds by giving back a transformed reward $X_{t,j}$. In doing so, they are agnostic to the strategy of the platform. Making the priming effect stochastic in each round captures a natural time-varying behavior of users on such advertising/recommendation platforms. It is important to distinguish between our setup and a setting where $D_{t,j}$ and $Z_{t,j}$ do not depend on time. The latter is a restricted stochastic setting where the priming effect is static across time ($R_{t,j}$ is the only randomness in the environment) and can result in potentially simpler learning strategies (such as playing each arm for $O(\expdj)$ rounds, and then estimating $\mu_i$s), although it is unclear if there will be any improvements in terms of the regret guarantees over ours. 

\noindent\textit{Knowledge of $\expdj$:}  We assume that the platform knows the first moment $\expdj$, as well as $N$ and $T$. This may seem limiting at first, but we argue that it is fairly benign: the platform needs no additional distributional knowledge to be able to achieve sub-linear regret matching the performance of stochastic MABs (modulo additive factors, see Section~\ref{sec:wear_in}). Further knowing (just) this first moment is readily possible for a platform using observational data or from previous interactions with its users. 
For simplicity of exposition, we assume that $\xi^D_j = \xi^D$ and $\xi^Z_j = \xi^Z$ (and thus work with $\expd$ and $\expz$ moving forward).

\begin{figure}
\centering
\includegraphics[width=.45\textwidth]{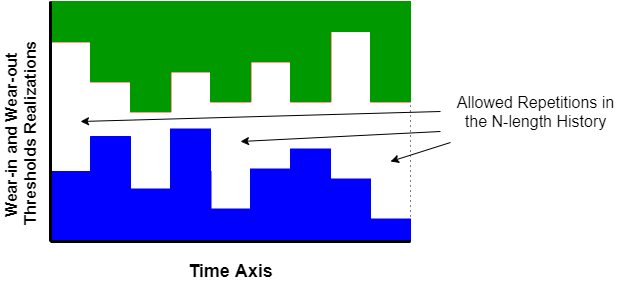}
\caption{Model Illustration: At any round (x axis), if the cumulative number of times a chosen arm was played is in the white region, then a reward is obtained by the platform. If it is lower (wear-in, blue region) or higher (wear-out, green region), the platform does not get the reward. Each pair of blue and green bars is for the arm that was pulled in that time step.}
\end{figure}\label{fig:model}

\noindent{\textbf{Goal:}} We want to design an online algorithm for the platform that plays a sequence of arms $\{J_t\}$ such that the expected (pseudo-)regret $R_T$ of the algorithm when compared to a benchmark policy is sublinear in the time horizon $T$:
\begin{equation}\label{eq:regret_general}
\begin{split}
    R_T &= \mathbb{E}\left[\sum_{t=1}^T X_{t,\pi_t} \right] - \mathbb{E}\left[ \sum_{t=1}^{T} X_{t,J_t}\right],
\end{split}
\end{equation}
where $\pi = (\pi_1,...,\pi_T)$ lies in the class of benchmark policies $\Pi_B$, and each $\pi_t$ is a function that maps the history of arms played in the past and their rewards to arm choice in round $t$.  Note that the rewards for the benchmark are censored according to that policy and not the policy the learner is playing.
In the standard bandit setting, it is standard to assume that $\pi$ is a set of constant functions (i.e., $\pi_1=...=\pi_T$) that are determined by the mean rewards $\mu_j$ for $j\,\in\,\arms$. Similarly, in our setting, we consider benchmark policies that are designed to take into account the priming effect. Intuitively, the benchmark policy of playing the best arm in hindsight may not be suitable anymore, especially when both wear-in and wear-out effects are present (Section \ref{sec:wiwo}). We defer further discussion on the choice of the benchmark policy to Sections~\ref{sec:wear_in} and~\ref{sec:wiwo}.

\section{WEAR-IN EFFECT}\label{sec:wear_in} 

We start with those instances where wear-out effect is non-existent, i.e., $Z_{t,j} = N$ for all $t$ and $j \in \arms$. This simplifies the treatment, while allowing for the same analytical tools to be extended to the general setting in Section~\ref{sec:wiwo}. 

Methods such as UCB1~\citep{auer2002finite}, MOSS~\citep{audibert2009minimax} or Thompson Sampling~\citep{chapelle2011empirical} will not succeed in our problem setting for any reasonable values of $\expd$ (and $\expz$).  For instance, the UCB1 algorithm, which is based on the optimism principle, adaptively decreases its optimism over mean rewards such that arms are easily distinguishable from each other. But while doing this, it does not allow for a direct control on how arms switch between rounds. Even when only wear-in effect is present, is impact is minimal only when $\expd << N$, which is quite limiting. To control the switching between arms, one could employ algorithms such as \aae~\citep{even2006action}, 
\ucbr~\citep{auer2010ucb} or \ucbtwo~\citep{auer2002finite}, which either play arms in a predictable round robin fashion or play the same arm consecutively. And this is precisely what we attempt to do here. 

Our proposed strategy, \bie{} (Algorithm~\ref{alg:expectation_known}) plays arms in phases (indexed by $m$). These phases are distinct and non-overlapping. The algorithm maintains a set of active arms, and in every phase, each arm from the active set is played repeatedly and consecutively. The algorithm also maintains a confidence bound $\tilde{\Delta}_m$ on the estimates of mean rewards in each phase. At the end of each phase, arms are eliminated based on confidence gaps computed using this arm-agnostic bound. We refer to the number of rounds that each active arm is played in a given phase as the \emph{incremental phase length}. The total length of the phase is the sum of such individual incremental phase lengths. In particular, a sequence of (cumulative) phase lengths $\{n_m| m=0,1,2,...\}$ determine the number of rounds each active arm has been played by phase $m$ (thus, the incremental phase length is $n_m - n_{m-1}$). And we denote the set of active arms in phase $m$ using the set $\armsm$.

Intuitively, longer incremental phase lengths help in negating the wear-in effect, whereas shorter incremental phase lengths help in negating the wear-out effect. These opposing consequences necessitate a different algorithm design when both the priming effect are present (see Section~\ref{sec:wiwo}). In the current setting, switching the arms too often reduces the rewards accumulated due to the wear-in effect and impedes the algorithm from exploring as well as exploiting what it has learned so far. Thus, a careful design of phase length is necessary, which we discuss below. In Algorithm~\ref{alg:expectation_known}, we use $T_j(m)$ to refer to the collection of times when the $j^{th}$ arm is played up to phase $m$.  Further, the estimated mean reward for arm $j$ at the end of phase $m$ is denoted by $\estmeanmj$.

\begin{algorithm}[t]
\SetAlgoLined
\textbf{Input:} A set of arms $\arms$, time horizon $T$, and phase length parameters $\{n_m| m=0,1,2,...\}$.\\
\textbf{Initialization:} Phase index $m=1$, $\armsm = \arms$, $\tilde{\Delta}_{1}$ = 1, $T_j(0) = \phi\;\; \forall j \in \arms$, where $\phi$ is the empty set, and time index $t= 1$.\\
\While{$t\leq T$ }{
\If{ $|\armsm|>1$}{
\emph{Play Arms:}\\
\For{ each active arm $j$ in $\armsm$}{
Set $T_j(m) = T_j(m-1)$.\\
Play $j$ for $n_m-n_{m-1}$ consecutive rounds and update $T_j(m)$.\\ 
\emph{Accrue rewards according to the environment model Equation (\ref{eq:feedback}).}\\
}
\emph{Eliminate Sub-optimal Arms:}\\
\For{each active arm $j$ in $\armsm$}{
$\quad \estmeanmj = \frac{1}{|T_j(m)|}\sum_{s\in T_j(m)} X_{s,j}$.
}
Construct $\armsmp$ by eliminating arms $j$ in $\armsm$ for which:\\
$\estmeanmj + \tilde{\Delta}_m/2  < \max_{j' \in \armsm} \overline{X}_{m,j'} - \tilde{\Delta}_m/2$.\\

\emph{Update the Confidence Bound:}\\
Set $\tilde{\Delta}_{m+1} = \frac{\tilde{\Delta}_m}{2}$.\\
Increment phase index $m$ by $1$ and update $t$ based on $\{n_m\}$ values up to the current phase.
}
Play the single arm in $\armsm$ and update $t$.\\
}
\caption{\bie}\label{alg:expectation_known}
\end{algorithm}

\noindent{\textbf{Benchmark Policy:}} To bound the regret defined in Equation~(\ref{eq:regret_general}) for this algorithm, we consider the benchmark policy $\pi$ to be one which plays the arm with the highest mean reward at all rounds. It can be shown that such a policy is optimal even with the wear-in effect (See Appendix \ref{subsec:optimal benchmark} for a proof).

There are two key aspects to bounding regret with respect to the aforementioned benchmark for Algorithm~\ref{alg:expectation_known}: (a) identifying an appropriate $n_m$ that depends on the wear-in effect parameter $\expd$, and (b) showing that this $n_m$ swiftly eliminates the sub-optimal arms. With these two steps addressed, we can get a regret guarantee such as below. 
\begin{theorem}\label{thm:expectation_known} For any $\lambda >0$, the expected (pseudo-)regret of \bie{} (Algorithm~\ref{alg:expectation_known}) is bounded as:
\begin{flalign}
 R_T \leq  & \underset{i\in\mathcal{K}_1}{\sum}\left(\Delta_i + \frac{64 \log(T)}{\Delta_i} + \frac{64\log(T)}{3}\right.&\nonumber\\
 &\left.\quad\quad\quad+ 32\sqrt{\log\left(\frac{4}{\Delta_i}\right)\expd\log(T)} \right)&\nonumber\\
 &\quad +\underset{i\in \mathcal{K}_1}{\sum} \frac{4\Delta_i}{T} + \underset{i\in \mathcal{K}_2}{\sum} \frac{32}{T} + \underset{\{i \in \mathcal{K}_2: \Delta_i < \lambda\}}{\max} \Delta_i T,&\nonumber
\end{flalign}
where $\mathcal{K}_1 = \{ i\in \mathcal{K} \vert \Delta_i > \lambda \}$, $\mathcal{K}_2 = \{ i\in \mathcal{K} \vert \Delta_i >0 \}$, and  $\Delta_{i} = \mu^* - \mu_{i}$.
\end{theorem}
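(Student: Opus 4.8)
The plan is to follow the phased elimination template of \ucbr{}/\aae{} and analyze \bie{} via a high-probability ``clean'' event on which every active arm's empirical mean stays within $\dtm/2$ of a suitably defined target, so that on this event the sub-optimal arms are eliminated on schedule and the regret reduces to counting plays before elimination. Since $\dtm = 2^{1-m}$ halves each phase, I would first fix the phase lengths $\{n_m\}$ so that by phase $m$ the number of plays $n_m$ of each surviving arm is large enough to drive the estimation error of $\estmeanmj$ below $\dtm/2$; the right choice scales like $\log(T)/\dtm^2$ from the reward noise plus a wear-in correction, and the two steps flagged before the statement --- choosing $n_m$ as a function of $\expd$, and showing it eliminates sub-optimal arms quickly --- are exactly what make the four groups of terms in the bound appear.

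The heart of the argument is controlling $\estmeanmj$ despite the wear-in censoring. I would decompose the accrued reward as $X_{s,j}=R_{s,j}-R_{s,j}\indicator[\,\text{count}<D_{s,j}\,]$, so that $\sum_{s\in T_j(m)}X_{s,j}$ equals the uncensored sum $\sum_s R_{s,j}$ minus the censored mass. The first piece concentrates around $\mu_j$ by a Bernstein inequality (variance $\mu_j(1-\mu_j)$), producing the $64\log(T)/\Delta_i$ and additive $\tfrac{64}{3}\log(T)$ terms. For the censored mass I would build a martingale on the running sum of per-round censoring deviations and apply the Bernstein inequality for martingales together with the Azuma--Hoeffding/Doob optional-stopping bound used by \cite{pike2018bandits} and \cite{auer2010ucb}. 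Because each consecutive block contributes at most $D\le a$ censored rounds with mean $\expd$, the total censored count up to phase $m$ has mean of order $\expd\cdot m$ and variance of order $\expd\cdot m$; the key point is that this mean is, to first order, common across all arms still active at phase $m$ --- they are each played in the same blocks --- so it cancels in the pairwise elimination test $\estmeanmj+\dtm/2<\max_{j'\in\armsm}\overline{X}_{m,j'}-\dtm/2$ up to a benign $(1-o(1))$ attenuation of the true gap. What survives is the martingale \emph{fluctuation} of order $\sqrt{\expd\,m\,\log T}$, and dividing by $n_m$ gives the per-phase contribution; choosing $n_m$ to make this at most $\dtm/4$ is precisely what yields the $\sqrt{\log(4/\Delta_i)\,\expd\log T}$ term rather than a term linear in $\expd$.

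With the clean event in hand, I would bound the regret by the standard decomposition $R_T\le\sum_i\Delta_i\,\mathbb{E}[\text{plays of }i]+(\text{clean-event failure})$. On the clean event an arm $i$ with $\Delta_i>\lambda$ cannot survive past the phase $m_i$ at which $\dtm$ first drops below a constant times $\Delta_i$, so $m_i=\mathrm{O}(\log(1/\Delta_i))$ and its total plays are $n_{m_i}$; multiplying by $\Delta_i$ converts the two components of $n_{m_i}$ into the gap-dependent $64\log(T)/\Delta_i$ and the wear-in $32\sqrt{\log(4/\Delta_i)\expd\log T}$ contributions for $i\in\mathcal{K}_1$. The failure probability of the clean event is pushed below $1/T$ by a union bound over arms and phases, and multiplying by the trivial per-step regret gives the $4\Delta_i/T$ and $32/T$ residual terms over $\mathcal{K}_1$ and $\mathcal{K}_2$. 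Finally, arms with $\Delta_i\le\lambda$ can never be reliably separated and may be played up to the horizon, contributing the unavoidable $\max_{\{i\in\mathcal{K}_2:\Delta_i<\lambda\}}\Delta_i T$; collecting these pieces gives the stated bound.

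The step I expect to be the main obstacle is the concentration of $\estmeanmj$ under censoring: unlike the standard setting the summands $X_{s,j}$ are neither i.i.d.\ nor unbiased, and the censoring indicator depends on the algorithm's own past plays, so the estimator is policy dependent. Making the martingale argument rigorous requires (i) a careful choice of filtration so that the optional-stopping/Doob argument applies to the history-dependent censoring sequence, (ii) a clean proof that the censoring bias is common enough across active arms that it only attenuates the detectable gap by a constant factor rather than corrupting the elimination test, and (iii) verifying that the phase-length choice simultaneously dominates the reward-noise requirement ($\sim\log T/\dtm^2$) and the censoring-fluctuation requirement ($\sim\sqrt{\expd\,m\,\log T}/\dtm$) without reintroducing a linear-in-$\expd$ penalty. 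Establishing (ii) and (iii) together is what turns the naive $\mathrm{O}(\expd)$ corruption into the advertised sublinear $\sqrt{\expd}$ dependence.
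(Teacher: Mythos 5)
Your proposal is correct and follows essentially the same route as the paper's proof: the same decomposition of the censored reward into an uncensored-deviation term plus a censoring martingale, the same concentration tools (Freedman's inequality for martingales with the quadratic-variation bound of order $m\expd$ from Lemma~\ref{variation}, and the Azuma--Hoeffding/optional-stopping bound of Lemma~\ref{doob} for the reward noise), essentially the same phase-length choice $n_m \lesssim \log T/\dtm^2 + \sqrt{m\expd\log T}/\dtm$, and the same elimination-schedule plus failure-probability plus $\lambda$-threshold case accounting. The only notable departure is your step (ii): the paper makes no cross-arm bias-cancellation argument --- in Lemma~\ref{stocnm} it simply drops the non-positive conditional-mean censoring term to get a one-sided concentration bound and then invokes that bound for both the overestimation and underestimation events of the elimination test --- so your gap-attenuation argument supplies a (more careful) treatment of a subtlety the paper's own proof glosses over.
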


Given an appropriate choice for phase lengths $n_m$, the proof of the above theorem follows that in~\cite{auer2010ucb}, where a similar phased-based algorithm was suggested for the vanilla stochastic MAB. Following Theorem~\ref{thm:expectation_known}, we can also obtain a corresponding instance independent bound, as shown in the following corollary.
\begin{corollary}\label{coroll:expectation_known}
For all $T \geq K$, choosing $\lambda=\sqrt{\frac{K\log(T)}{T}}$ and using $\log(1/\tilde{\Delta}_m) \leq \log(T)$, the expected (pseudo-)regret of \bie{} is $\mathrm{O}\left(\sqrt{KT\log T} + K\sqrt{\log^2 T \expd}\right)$.
\end{corollary}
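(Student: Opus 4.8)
The plan is to reduce the instance-dependent bound of Theorem~\ref{thm:expectation_known} to a minimax-style (instance-independent) guarantee by thresholding the sub-optimality gaps at $\lambda$. I would split the sub-optimal arms into the well-separated set $\mathcal{K}_1=\{i:\Delta_i>\lambda\}$ and the near-optimal arms with $\Delta_i\leq\lambda$, bound each group's contribution to $R_T$ separately, and then balance the two groups through the stated choice of $\lambda$.

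For the leading bandit term, the dominant summand over $\mathcal{K}_1$ is $\sum_{i\in\mathcal{K}_1}\tfrac{64\log(T)}{\Delta_i}$. Using $\Delta_i>\lambda$ on this set together with $|\mathcal{K}_1|\leq K$, I would upper bound it by $\tfrac{64K\log(T)}{\lambda}$. The near-optimal arms are already accounted for by the last term of the theorem, namely $\max_{\{i\in\mathcal{K}_2:\Delta_i<\lambda\}}\Delta_i T\leq \lambda T$. Substituting $\lambda=\sqrt{K\log(T)/T}$ equalizes these two contributions, each of order $\mathrm{O}(\sqrt{KT\log T})$, which produces the first term of the claim.

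For the priming contribution $\sum_{i\in\mathcal{K}_1}32\sqrt{\log(4/\Delta_i)\,\expd\,\log(T)}$, I would use $\Delta_i>\lambda$ to control the logarithm: since $1/\Delta_i<1/\lambda=\sqrt{T/(K\log T)}$, we get $\log(4/\Delta_i)\leq \log 4+\tfrac12\log\big(T/(K\log T)\big)=\mathrm{O}(\log T)$, consistent with the hint $\log(1/\tilde{\Delta}_m)\leq\log(T)$. Hence each summand is $\mathrm{O}\big(\sqrt{\log^2 T\,\expd}\big)$, and summing over the at most $K$ arms yields the second term $\mathrm{O}\big(K\sqrt{\log^2 T\,\expd}\big)$. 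The remaining pieces --- $\sum_{i\in\mathcal{K}_1}\Delta_i\leq K$, the per-arm constant $\tfrac{64}{3}\log(T)$ contributing $\mathrm{O}(K\log T)$, and the $\mathrm{O}(K/T)$ residuals --- are all of lower order than $\sqrt{KT\log T}$ in the regime of interest (large $T$, in particular $T\gtrsim K\log T$), so they are absorbed into the big-O.

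This is a routine reduction, so I do not anticipate a serious obstacle. The only points requiring care are (i) confirming that $\log(4/\Delta_i)$ is uniformly $\mathrm{O}(\log T)$ over $\mathcal{K}_1$, so that the priming term does not hide an extra logarithmic factor, and (ii) verifying that the additive $\mathrm{O}(K\log T)$ term is genuinely dominated, which relies on $T\geq K$ and $T$ being large. The conceptual crux is simply the balancing choice of $\lambda$, which is computationally immediate once the two groups have been bounded.
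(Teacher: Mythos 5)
Your proposal is correct and follows essentially the same route as the paper: the corollary is obtained directly from Theorem~\ref{thm:expectation_known} by substituting $\lambda=\sqrt{K\log(T)/T}$ to balance $\sum_{i\in\mathcal{K}_1}64\log(T)/\Delta_i \leq 64K\log(T)/\lambda$ against $\lambda T$, and by bounding $\log(4/\Delta_i)=\mathrm{O}(\log T)$ uniformly over $\mathcal{K}_1$ to extract the additive priming term $\mathrm{O}(K\sqrt{\log^2 T\,\expd})$. Your side remark that absorbing the $\mathrm{O}(K\log T)$ term needs $T\gtrsim K\log T$ (slightly stronger than the stated $T\geq K$) is a fair observation, but it is an asymptotic-in-$T$ looseness present in the paper's own statement as well, not a gap in your argument.
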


A key point to note is that the \textit{wear-in} effect parameter $\expd$ appears as an additive penalty. The leading term, $\sqrt{KT\log T}$ is only a logarithmic factor away from the best known bounds for the vanilla stochastic MAB~\citep{audibert2009minimax}. Thus our regret upper bound behaves gracefully with the level of the wear-in priming effect. 

To define suitable phase length parameters $\{n_m\}$, we design martingales on the sequences of bias adjusted rewards sequences and eventually bound the growth of such martingales under our model. Appendix~\ref{subsec:prelim} contains an overview of martingales, stopping times, and key concentration bounds required for the analysis. For a detailed discussion on martingale properties, one can refer~\cite{mitzenmacher2005probability}. 
For any active arm $j$ and phase $m$, let $S_{m,j}$ denote the time in this phase when the algorithm starts playing this arm. Similarly let $U_{m,j}$ denote the time in this phase when the algorithm stops playing this arm. Also, let $T_t(j,N)$ be a random variable that denotes the number of times that the arm $j$ was played in the rounds $\{t-N,...t-1\}$.  We define a filtration $\{\mathcal{G}_s\}_{s=0}^{\infty}$ by setting $\{\mathcal{G}_0\} = \{\Omega, \phi \}$ with $\Omega$ suitably defined, and letting $\{\mathcal{G}_t\}$ to be the $\sigma$-algebra over $(X_{1}....X_t,J_1....J_t,D_{1,J_1}....D_{t,J_t},R_{1,J_1}...R_{t,J_t})$. In Lemma~\ref{stocnm} below, we give a constructive proof for the choice of $n_m$ such that the estimated mean reward for an arm $j$ gets closer to its true mean at the end of phase, and we can use this property to eliminate sub-optimal arms quickly.

\begin{lemma}\label{stocnm}
There exists a positive $n_m$ for which the estimate $\overline{X}_{m,j}$ calculated by Algorithm~\ref{alg:expectation_known} for an active arm $j$ ($j\in \mathcal{K}_m$) and phase $m$, satisfies $\overline{X}_{m,j} - \mu_j \leq \tilde{\Delta}_m/2$ with probability at least $1-\frac{2}{T^2}$.
\end{lemma}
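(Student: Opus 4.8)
The plan is to split the estimation error $\estmeanmj-\mu_j$ into a zero-mean martingale part plus a deterministic \emph{non-positive} bias coming from wear-in censoring, and then to control only the martingale part. The key point is that censoring can only push $\estmeanmj$ \emph{below} $\mu_j$, so the bias is harmless for the one-sided upper bound the lemma asks for.

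First I would fix an active arm $j$ and phase $m$ and enumerate the rounds $s\in T_j(m)$ at which $j$ has been played, so $\estmeanmj=\frac{1}{|T_j(m)|}\sum_{s\in T_j(m)}X_{s,j}$. For each such $s$, the window count $c_s:=\sum_{k=\max(s-N,0)}^{s}\indicator[J_k=j]$ is $\mathcal{G}_{s-1}$-measurable (the current action $J_s=j$ is forced by the algorithm and the remaining terms lie in the past), while $R_{s,j}$ and $D_{s,j}$ are drawn fresh and independently of $\mathcal{G}_{s-1}$. Since $Z_{t,j}=N$ throughout this section, Equation~(\ref{eq:feedback}) gives $X_{s,j}=R_{s,j}\indicator[c_s\geq D_{s,j}]$, hence
\[
\mathbb{E}[X_{s,j}\mid \mathcal{G}_{s-1}] = \mu_j\,\Pr[D_{s,j}\leq c_s] \leq \mu_j .
\]
This is the crucial observation: the conditional mean of every accrued reward is at most $\mu_j$, so the bias-adjusted increments $Y_s:=X_{s,j}-\mathbb{E}[X_{s,j}\mid\mathcal{G}_{s-1}]$ form a martingale difference sequence for $\{\mathcal{G}_s\}$. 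Writing $n_m=|T_j(m)|$, I decompose
\[
\estmeanmj-\mu_j = \frac{1}{n_m}\sum_{s\in T_j(m)} Y_s + \frac{1}{n_m}\sum_{s\in T_j(m)}\big(\mathbb{E}[X_{s,j}\mid\mathcal{G}_{s-1}]-\mu_j\big),
\]
where the second sum is non-positive by the display above. It therefore suffices to bound the martingale $\sum_s Y_s$ from above by $n_m\dtm/2$.

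Second, I would apply a Bernstein-type (Freedman) inequality for martingales, which is legitimate because $Y_s\in[-1,1]$ and the conditional variances satisfy $\mathrm{Var}(X_{s,j}\mid\mathcal{G}_{s-1})\leq\mathbb{E}[X_{s,j}^2\mid\mathcal{G}_{s-1}]\leq\mu_j\leq1$, so that $V:=\sum_{s}\mathrm{Var}(X_{s,j}\mid\mathcal{G}_{s-1})\leq n_m$. The resulting tail has the shape $\exp\!\big(-\tfrac{(n_m\dtm/2)^2/2}{V+(n_m\dtm/2)/3}\big)$, whose exponent is $\Theta(n_m\dtm^2)$; this is the source of the $\tfrac{1}{\Delta_i}$ and the $\tfrac{1}{3}$ constants that later surface in Theorem~\ref{thm:expectation_known}. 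Setting this tail to be at most $2/T^2$ and inverting it yields an explicit positive incremental-phase requirement of order $\dtm^{-2}\log T$, which proves the existence claim. One subtlety must be handled with care: the index set $T_j(m)$ and the start/stop rounds $S_{m,j},U_{m,j}$ are determined adaptively by the elimination history, so $\sum_s Y_s$ should be read as a martingale evaluated at a bounded stopping time, and I would invoke the optional-stopping / maximal-inequality form of the bound (as the excerpt anticipates via Doob's theorem) so the estimate survives this data dependence.

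The main obstacle I anticipate is exactly this \emph{policy dependence}. Unlike the i.i.d.\ case, the summands $X_{s,j}$ are neither independent nor identically distributed—their conditional means depend on the realized counts $c_s$, which in turn depend on the past plays—and even the membership of a round in $T_j(m)$ is random. The delicate part is setting up the filtration and the stopping-time bookkeeping so that the $\leq 0$ bias is genuinely clean and Freedman's inequality applies verbatim; once that scaffolding is in place, the remaining step is the routine inversion of the Bernstein tail to read off $n_m$.
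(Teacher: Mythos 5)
Your argument does prove the lemma \emph{as literally stated}, and it does so by a genuinely different and shorter route than the paper's: you center each accrued reward at its conditional mean, observe that wear-in censoring can only \emph{lower} that mean (since the window count $c_s$ is $\mathcal{G}_{s-1}$-measurable and $R_{s,j},D_{s,j}$ are fresh, $\mathbb{E}[X_{s,j}\mid\mathcal{G}_{s-1}]=\mu_j\Pr[D_{s,j}\le c_s]\le\mu_j$), discard the resulting non-positive bias, and apply Freedman with total variance at most $n_m$. This gives $n_m=\mathrm{O}(\log T/\tilde{\Delta}_m^2)$ with \emph{no} dependence on $\expd$. The paper instead keeps the censoring loss explicit: it splits the cumulative deviation into the raw-reward martingale $\sum_t(R_{t,j}-\mu_j)$ (Lemma~\ref{term1}, via the Azuma--Hoeffding/optional-stopping bound), the censoring martingale built from $M_t=\sum_i R_{t,j}\indicator\{t\le S_{i,j}+D_{t,j}\}\indicator\{S_{i,j}\le t\le U_{i,j}\}$ (Lemma~\ref{term2}, via Freedman with the predictable-variation bound $m\expd$ of Lemma~\ref{variation}), and a drift term $-\sum_t\mathbb{E}[M_t\mid\mathcal{G}_{t-1}]$; this is what produces the term $8\sqrt{m\expd\log(T)}/\tilde{\Delta}_m$ in Equation~\ref{eq:stocnmval}.

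That your phase length is free of $\expd$ is not a bonus; it is the symptom of a real gap. Lemma~\ref{stocnm} is invoked in the proof of Theorem~\ref{thm:expectation_known} for \emph{two} events: $E$ (a sub-optimal arm's estimate does not overshoot, $\overline{\mu}_i\le\mu_i+w_{m_i}$) and $R$ (the optimal arm's estimate does not undershoot, $\overline{\mu}_*\ge\mu_*-w_{m_i}$). Your ``the bias is harmless'' step works only for $E$, the easy direction. For $R$ the censoring bias has the harmful sign: every arm's estimate is dragged \emph{down} by roughly $\mu_j\, m\expd/n_m$ (about $\expd$ censored plays per phase, over $m$ phases), and no concentration inequality removes a deterministic bias --- the only cure is to make $n_m$ large enough to dominate it, which is exactly why the paper's phase length, and hence its regret bound, carries the $\expd$ term. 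An algorithm run with your smaller $n_m$ would have no protection against under-estimation of the optimal arm, and the elimination analysis (Case (a) and Case (c) of the theorem's proof) would collapse. In fairness, the paper's own phrasing invites your reading: the statement is one-sided, and its appendix proof has sign slips (the direction of its first displayed inequality, and the fact that the drift term it discards as non-positive is precisely the term that must be paid for in the lower-tail direction). But the content the lemma must deliver is two-sided concentration; to make your cleaner set-up serve that purpose you would have to also bound $\mu_j-\estmeanmj$, which forces you to control both the drift $\sum_t\mathbb{E}[M_t\mid\mathcal{G}_{t-1}]$ and the fluctuations of the censoring martingale --- bringing you back to an $\expd$-dependent phase length.
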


Below, we show how unlikely it is to grossly overestimate the mean value, assuming $j$ is a sub-optimal arm.

\noindent{\textbf{Outline of the proof:}} We build on the observation that the cumulative sums of bias adjusted rewards ($\overline{X}_{m,j} - \mu_j$) can be decomposed into a couple of martingale sequences (see the first two terms in Equation~\ref{eq:71}). As the algorithm progresses, we show via Lemmas~\ref{term1} and~\ref{term2} that the growth of both these martingales can be bounded with high probability in terms of the phase length $n_m$ and the wear-in effect parameter $\expd$. To start, it follows that for each arm $j$:
\begin{gather}
    \sum_{i=1}^m\sum_{t=S_{i,j}}^{U_{i,j}} ( X_{t,j} - \mu_j ) \leq \sum_{i=1}^m\sum_{t=S_{i,j}}^{U_{i,j}} ( R_{t,J_t} - \mu_j )\nonumber \\-  \sum_{i=1}^m\sum_{t=S_{i,j}}^{U_{i,j}} R_{t,J_t}\indicator\{ T_t(J_t,N) \leq D_{t,J_t}\}.\label{eq:stoc1}
\end{gather}
Since only one arm is played at a time, therefore, $J_t=j$ and $T_t(J_t,N) \leq t-S_{i,j}$ within a phase. Define $A_{i,t} := R_{t,J_t}\indicator\{ t \leq S_{i,j} + d_{t,J_t}\}$ and $M_t := \sum_{i=0}^m A_{i,t}\indicator\{ S_{i,j}\leq t \leq U_{i,j} \}$ . We can upper bound Equation \ref{eq:stoc1} and write it in terms of $M_t$ as:
\begin{gather}
    \sum_{i=1}^m\sum_{t=S_{i,j}}^{U_{i,j}} ( X_t - \mu_j ) \leq \sum_{i=1}^m\sum_{t=S_{i,j}}^{U_{i,j}} ( R_{t,j} - \mu_j )\nonumber\\ +\sum_{t=1}^{U_{m,j}} (\mathbb{E}[M_t\vert G_{t-1}] - M_t ) -
    \sum_{t=1}^{U_{m,j}}\mathbb{E}[M_t\vert G_{t-1}].\label{eq:71}
\end{gather}
Due to the above construction, we are able to succinctly separate the loss/modulation in rewards due to the \textit{wear-in} effect. Next, we bound each term in Equation \ref{eq:71} individually. The first term is the deviations of the un-modulated rewards from their true means, and a reasonable upper bound is desired to get the right dependence on $T$ in the Corollary~\ref{coroll:expectation_known}. Next, the second term captures the impact of the wear-in effect, and can be upper bounded using Lemmas~\ref{term1} and~\ref{term2}. Finally, by taking trivial non-negative upper bound on the last term in Equation~\ref{eq:71} above, and applying a simple union bound, we obtain the following expression for $n_m$ that guarantees the claim made in the statement of Lemma~\ref{stocnm} with probability at least $1-\frac{2}{T^2}$:
\begin{equation}
\label{eq:stocnmval}
\resizebox{.9\columnwidth}{!} 
{
    $n_m \leq 1+\frac{4 \log(T)}{\tilde{\Delta}^2_m} + \frac{16\log(T)}{3\tilde{\Delta}_m} + \frac{8\sqrt{m\expd\log(T)}}{\tilde{\Delta}_m}.$
}
\end{equation}
Below, we now discuss the supporting lemmas needed for Lemma~\ref{stocnm}. First, we start with the second term of Equation~\ref{eq:71}. The following lemma shows that $Y_s = \sum_{t=1}^s (\mathbb{E}[M_t\vert G_{t-1}] - M_t)$ forms a martingale. This fact is a prerequisite for Lemma~\ref{term2}.
\begin{lemma}\label{martingaleproof}
$Y_s := \sum_{t=1}^s (\mathbb{E}[M_t\vert G_{t-1}] - M_t)$ for all $s\geq 1$ with $Y_0=0$ is a martingale with respect to the filtration $\{G_s\}^{\infty}_{s=0}$ with increments $C_s = \mathbb{E}[M_s\vert G_{s-1}] - M_s $ satisfying $\mathbb{E}[C_s\vert G_{s-1}] = 0$ and  $C_s \leq 1$ for all $s\geq 1$.
\end{lemma}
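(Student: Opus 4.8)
The plan is to verify the three defining properties of a martingale—adaptedness, integrability, and the vanishing-conditional-increment condition—directly from the construction of $M_t$, and then to read off the increment bound from the boundedness of $M_t$. The crux is the clean observation that $C_s = \mathbb{E}[M_s \mid G_{s-1}] - M_s$ is, by construction, a ``compensated'' increment whose conditional mean vanishes by the tower property, so that any such sum of compensated increments is automatically a martingale.

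First I would establish that $M_t$ is $G_t$-measurable and bounded. Measurability follows because $R_{t,J_t}$, $J_t$, and $D_{t,J_t}$ are all coordinates of the vector generating $G_t$, while the phase boundaries $S_{i,j}$ and $U_{i,j}$ are determined by the algorithm's past choices and are hence $G_{t-1}$-measurable (indeed predictable). Thus each summand $A_{i,t}\indicator\{S_{i,j}\le t\le U_{i,j}\}$ is $G_t$-measurable, and so is $M_t$. For boundedness I would invoke the non-overlapping structure of the phases: for a fixed arm $j$, the intervals $[S_{i,j},U_{i,j}]$ are disjoint across $i$, so at any time $t$ at most one indicator $\indicator\{S_{i,j}\le t\le U_{i,j}\}$ is nonzero. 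Consequently $M_t = A_{i,t}$ for the unique relevant $i$ (or $M_t=0$), and since $A_{i,t}=R_{t,J_t}\indicator\{\cdots\}$ with $R_{t,J_t}\in[0,1]$, we obtain $0\le M_t\le 1$.

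Given boundedness, integrability of $Y_s$ is immediate (a finite sum of bounded terms and their conditional expectations), and $Y_s$ is $G_s$-measurable since it is built from $M_1,\dots,M_s$ and the $G_{t-1}$-measurable quantities $\mathbb{E}[M_t\mid G_{t-1}]$. The martingale property then reduces to showing $\mathbb{E}[C_s\mid G_{s-1}]=0$: because $\mathbb{E}[M_s\mid G_{s-1}]$ is $G_{s-1}$-measurable, pulling it out of the conditional expectation yields $\mathbb{E}[C_s\mid G_{s-1}] = \mathbb{E}[M_s\mid G_{s-1}] - \mathbb{E}[M_s\mid G_{s-1}] = 0$, hence $\mathbb{E}[Y_s\mid G_{s-1}] = Y_{s-1}$. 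Finally, the increment bound follows from the two halves of the boundedness claim: since $M_s\ge 0$ and $\mathbb{E}[M_s\mid G_{s-1}]\le 1$, we have $C_s = \mathbb{E}[M_s\mid G_{s-1}] - M_s \le \mathbb{E}[M_s\mid G_{s-1}] \le 1$.

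I expect the only genuine obstacle to be the careful bookkeeping that $M_t\in[0,1]$, which hinges on the disjointness of the phase intervals $[S_{i,j},U_{i,j}]$ and on the fact that the algorithm plays a single arm consecutively within each phase; everything else is a routine application of the tower property together with the general principle that a predictable-compensator difference is always a martingale increment. The bounded increment $C_s\le 1$ is precisely the property that Lemma~\ref{term2} will require in order to apply a martingale concentration inequality, so I would state it explicitly even though it is a one-line consequence of the boundedness argument.
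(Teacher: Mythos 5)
Your proof is correct and follows essentially the same route as the paper's: verify adaptedness of $M_t$ (and hence $Y_s$) to the filtration, apply the tower property to get $\mathbb{E}[C_s\vert G_{s-1}]=0$, and use the disjointness of the phase intervals $[S_{i,j},U_{i,j}]$ to conclude $M_t\in[0,1]$ and hence $C_s\le 1$. Your treatment of the increment bound (explicitly using $M_s\ge 0$ together with $\mathbb{E}[M_s\vert G_{s-1}]\le 1$) is in fact slightly more careful than the paper's terse statement of the same fact.
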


The next lemma is used to bound the sum of deviations of the received rewards from their mean values at the end of a phase (the first term in Equation~\ref{eq:71}). In effect, we show that the sum of accrued rewards will be close to their sum of means with high probability. This is a high probability guarantee that reasonably long sequences of rewards can give information about the mean reward parameters of the arms.
\begin{lemma}\label{term1} 
With probability at least $1 - \frac{1}{T^2}$, $\sum_{i=1}^m\sum_{t=S_{i,j}}^{U_{m,j}}( R_{t,j} -\mu_j ) \leq \sqrt{n_m \log(T)}.$
\end{lemma}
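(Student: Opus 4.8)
The plan is to exploit the fact that, unlike the censored rewards $X_{t,j}$, the raw samples $R_{t,j}$ are \emph{exogenous}: they are drawn i.i.d.\ from $\xi_j$ with mean $\mu_j$ and support in $[0,1]$, and their law depends neither on the policy nor on the elimination decisions. The sum in the statement ranges over exactly those rounds in phases $1$ through $m$ in which arm $j$ is played, and since each active arm is played for the cumulative count $n_m$ rounds by the end of phase $m$, on the event $\{j\in\mathcal{K}_m\}$ this sum consists of a \emph{deterministic} number $n_m$ of centered i.i.d.\ terms $R^{(1)}_j-\mu_j,\dots,R^{(n_m)}_j-\mu_j$, where $R^{(k)}_j$ denotes the $k$-th reward sample drawn for arm $j$. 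Note in particular that this term carries no dependence on the wear-in parameter $\expd$, which is what lets us isolate it from the policy-dependent contribution handled in Lemma~\ref{term2}.

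First, I would set up the partial-sum martingale $W_s=\sum_{k=1}^s (R^{(k)}_j-\mu_j)$ with respect to the natural filtration; its increments have conditional mean zero and lie in an interval of length at most $1$ (since $R^{(k)}_j\in[0,1]$). The quantity to be controlled is then simply $W_{n_m}$ on the event that $j$ survives through phase $m$. Applying the range-based Azuma--Hoeffding inequality for bounded increments gives, for any $\epsilon>0$, the tail bound $\Pr[W_{n_m}\geq\epsilon]\leq\exp(-2\epsilon^2/n_m)$.

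Finally, I would substitute the target deviation $\epsilon=\sqrt{n_m\log T}$, for which the exponent becomes $-2(n_m\log T)/n_m=-2\log T$, yielding a failure probability of exactly $\exp(-2\log T)=T^{-2}$. Taking the complement delivers the claimed $1-\tfrac{1}{T^2}$ high-probability bound, with the one-sided form matching the one-sided inequality in the statement.

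The main obstacle is not the concentration step itself but justifying that the random summation limits $S_{i,j},U_{i,j}$ and the data-dependent event $\{j\in\mathcal{K}_m\}$ do not corrupt the i.i.d.\ structure, since the elimination rule observes the very $R$ samples being summed. I would resolve this by arguing that the count of summands is deterministic given that $j$ is active through phase $m$, so that $W_{n_m}$ is the sum of a \emph{fixed} prefix of the sample stream and the inequality applies verbatim; alternatively one may treat the number of plays as a stopping time $\tau\le n_m$ and invoke Doob's optional stopping together with a maximal inequality for $W_s$. A secondary care point is to use the \emph{range}-$1$ form of the bound (constant $2$ in the exponent) rather than the symmetric $|C_k|\le 1$ version, since only the former produces the sharp $T^{-2}$ tail; the looser symmetric bound would yield merely $T^{-1/2}$.
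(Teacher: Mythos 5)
Your proof is correct, but it routes around the paper's key lemma rather than through it. The paper keeps the sum indexed by real time over the whole horizon and applies a predictable-selection version of Azuma--Hoeffding (Lemma~\ref{doob} in Appendix~\ref{subsec:prelim}): it writes the sum as $\sum_{t=1}^T \rho_t Z_t$ with $Z_t = R_{t,j}-\mu_j$ and $\rho_t = \indicator\{J_t=j,\; t\le U_{m,j}\}$, verifies that $\rho_t$ is $\mathcal{F}_{t-1}$-measurable because the phased structure of Algorithm~\ref{alg:expectation_known} determines in advance which arm is played at time $t$, uses $\sum_t \rho_t = |T_j(m)| \le n_m$, and invokes the lemma with $a=-\mu_j$, $c=1$, $\eta=\sqrt{n_m\log T}$. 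You instead re-index the rewards by play count, observe that on the survival event the sum is a fixed prefix $W_{n_m}$ of an i.i.d.\ stream, and apply plain range-form Hoeffding; since the event that $j$ survives and the sum exceeds $\sqrt{n_m\log T}$ is contained in $\{W_{n_m} \ge \sqrt{n_m\log T}\}$, no conditioning bias arises, exactly as you argue. Both proofs rest on the same two facts --- the fresh draw $R_{t,j}$ is independent of the decision to play $j$ at time $t$, and the range-$1$ (constant $2$) exponent is what produces the $T^{-2}$ tail, a point you correctly flag --- so they are close cousins, but yours is more elementary (no appendix machinery needed), while the paper's is more robust: the hypothesis $\sum_t \rho_t \le n_m$ tolerates a random number of plays $|T_j(m)| \le n_m$ (e.g., a final phase truncated by the horizon) with no extra work, which in your version is precisely the gap that forces the stopping-time/maximal-inequality patch you mention at the end. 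If you adopt your route, that patch should be made explicit rather than optional, since the algorithm's while-loop can indeed cut a phase short.
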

\noindent{\textbf{Outline of the proof:}}
For an arm $j$ that was played at time $t$, the quantity $R_{t,J_t} -\mu_j$ can be either positive or negative. And we are interested in bounding the maximum cumulative positive growth. One can interpret this bound to be the maximum expected value of a finite $1$-dimensional random walk, where an agent is taking steps at $n_m$ randomly chosen time instances in the given horizon $T$. The cumulative displacements of the agent because of the random walk form a martingale with an appropriately defined filtration. We use a modified version of the Azuma-Hoeffding inequality (see Lemma~\ref{doob} in Appendix~\ref{subsec:prelim}) to upper bound this displacement with high probability.  

Next, through the following lemma, we show that the wear-in effect martingale sequence (the second term in Equation~\ref{eq:71} does not have any \textit{sudden jumps} by upper bounding the corresponding quadratic variation process.

\begin{lemma}\label{variation}
For any $t$, let $P_t= \mathbb{E}[M_t\vert G_{t-1}] - M_t$.  It follows that: $\sum_{t=1}^{U_{m,j}} \mathbb{E}[P_t^2\vert G_{t-1}] \leq m\expd.$
\end{lemma}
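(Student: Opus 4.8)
The plan is to bound the predictable quadratic variation by a first-moment (expected-count) quantity, and then telescope that count into $\expd$ across the $m$ phases. The starting observation is that since $P_t = \mathbb{E}[M_t \vert G_{t-1}] - M_t$ has zero conditional mean, $\mathbb{E}[P_t^2 \vert G_{t-1}]$ is exactly the conditional variance of $M_t$, so $\mathbb{E}[P_t^2 \vert G_{t-1}] = \mathbb{E}[M_t^2 \vert G_{t-1}] - (\mathbb{E}[M_t \vert G_{t-1}])^2 \leq \mathbb{E}[M_t^2 \vert G_{t-1}]$. Because $R_{t,J_t} \in [0,1]$ and each $A_{i,t}$ is a product of $R_{t,J_t}$ with an indicator, we have $A_{i,t} \in [0,1]$; moreover the intervals $[S_{i,j}, U_{i,j}]$ for a fixed arm $j$ are disjoint across phases $i$, so at any round $t$ at most one summand of $M_t = \sum_{i=0}^m A_{i,t}\indicator\{S_{i,j}\leq t \leq U_{i,j}\}$ is nonzero. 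Hence $M_t \in [0,1]$ and $M_t^2 \leq M_t$, which gives $\mathbb{E}[P_t^2 \vert G_{t-1}] \leq \mathbb{E}[M_t \vert G_{t-1}]$ pointwise.

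First I would use disjointness again to collapse the outer sum: whenever $t$ lies outside every playing interval of arm $j$ we have $M_t = 0$, so $\sum_{t=1}^{U_{m,j}} \mathbb{E}[M_t \vert G_{t-1}] = \sum_{i=1}^m \sum_{t=S_{i,j}}^{U_{i,j}} \mathbb{E}[A_{i,t}\vert G_{t-1}]$. For a term with $S_{i,j}\leq t \leq U_{i,j}$, the start time $S_{i,j}$ and the identity $J_t = j$ are $G_{t-1}$-measurable, whereas the fresh samples $R_{t,j}$ and $D_{t,j}$ are drawn independently of $G_{t-1}$ (and of each other). Therefore $\mathbb{E}[A_{i,t}\vert G_{t-1}] = \mu_j\, \mathbb{P}\!\left(D_{t,j} \geq t - S_{i,j}\right) \leq \mathbb{P}\!\left(D \geq t - S_{i,j}\right)$, using $\mu_j \leq 1$ and that the $D_{t,j}$ are identically distributed as $D$.

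Next I would evaluate the per-phase sum by re-indexing $k = t - S_{i,j}$, giving $\sum_{t=S_{i,j}}^{U_{i,j}} \mathbb{P}(D \geq t - S_{i,j}) = \sum_{k=0}^{U_{i,j}-S_{i,j}} \mathbb{P}(D \geq k) \leq \sum_{k \geq 0}\mathbb{P}(D \geq k)$. The tail-sum identity for a non-negative integer random variable, $\sum_{k \geq 1}\mathbb{P}(D \geq k) = \expd$, then bounds the contribution of each phase by $\expd$, where the $k=0$ term is absorbed by the precise form of the wear-in failure event in the definition of $A_{i,t}$ (which compares the in-phase play count $t - S_{i,j}$ against $D_{t,j}$ with the strict inequality convention used in Equation~\ref{eq:71}). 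Summing over the $m$ phases yields $\sum_{t=1}^{U_{m,j}} \mathbb{E}[P_t^2 \vert G_{t-1}] \leq m\expd$, as claimed.

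The main obstacle I anticipate is not any single inequality but the careful bookkeeping tying the two reductions together: arguing that exactly one phase-interval is active at each round (so the variance bound accumulates no cross terms), and correctly separating the $G_{t-1}$-measurable quantities ($S_{i,j}$ and the event $J_t = j$) from the freshly sampled, independent $R_{t,j}$ and $D_{t,j}$ so that the conditional expectation reduces cleanly to a tail probability of $D$. The only quantitative subtlety is ensuring the per-phase tail sum closes at exactly $\expd$ rather than $\expd + \mathbb{P}(D \geq 0)$, which hinges on the exact inequality convention in the wear-in indicator.
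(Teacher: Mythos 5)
Your proof is correct and follows essentially the same route as the paper's own argument: bound the predictable variation by the conditional second moment of $M_t$, use disjointness of the phase intervals of arm $j$ together with boundedness of rewards to reduce everything to per-phase tail probabilities of $D$, and close each phase's contribution with the tail-sum formula, giving $m\expd$ after summing over the $m$ phases. The $k=0$ off-by-one subtlety you flag is real, and the paper actually glosses over it (its step $\sum_{l\geq 0}\mathbb{P}(l\leq D)\leq \expd$ suffers the same issue); your resolution via the strict inequality in the wear-in indicator is the correct way to close the bound at exactly $\expd$ per phase.
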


\noindent{\textbf{Outline of the proof:}} From the definition of $P_t$, it is easy to see that $\sum_{t=1}^{U_{m,j}} \mathbb{E}[P_t^2\vert G_{t-1}] \leq \sum_{t=1}^{U_{m,j}} \mathbb{E}[M^2_t\vert G_{t-1}]$. Recall that $M_t$ denotes the sum of $m$ random variables of which only one is non-negative while others are zero, and this depends on the phase number the time index $t$ belongs to. Intuitively, this contributes the factor  $m$ to bound claimed. In a phase $i$, for arm $j$, the reward is lost if $\indicator\{t < S_{i,j}+D_{t,j}\} = 1$. We are interested in sum of these indicator values when $t$ varies from $1$ to $T$. In expectation, each indicator is given by $\mathbb{P}(t < S_{i,j} + D_{t,j})$, the summation of which can be upper bounded by $\expd$.

Finally, we can now bound the second term in Equation~\ref{eq:71}, using the following lemma.

\begin{lemma}\label{term2}
With probability at least $1-\frac{1}{T^2}$, $\sum_{t=1}^{U_{m,j}}(\mathbb{E}[M_t\vert G_{t-1}] - M_t ) \leq \frac{2}{3}\log(T)+ \sqrt{\frac{4\log^2(T)}{9}+4m\expd\log(T)}.$
\end{lemma}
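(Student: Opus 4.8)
The plan is to control the martingale $Y_{U_{m,j}} = \sum_{t=1}^{U_{m,j}}(\mathbb{E}[M_t\vert G_{t-1}] - M_t)$ by a Bernstein-type (Freedman) concentration inequality for martingales, for which the two preceding lemmas supply exactly the required inputs. Lemma~\ref{martingaleproof} already establishes that $Y_s$ is a martingale with respect to $\{G_s\}$ whose increments $C_s$ are centered and satisfy the one-sided bound $C_s \leq 1$. Lemma~\ref{variation} establishes that the predictable quadratic variation is deterministically bounded, $\sum_{t=1}^{U_{m,j}} \mathbb{E}[P_t^2\vert G_{t-1}] \leq m\expd$, where $P_t=C_t$. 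These are precisely the two ingredients (a uniform upper bound on the increments and a bound on the accumulated conditional variance) needed to invoke the martingale Bernstein inequality stated in Appendix~\ref{subsec:prelim}.

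Concretely, I would apply that inequality with increment bound $b=1$ and variance proxy $v=m\expd$, obtaining for every $a>0$:
\begin{equation*}
\mathbb{P}\left( Y_{U_{m,j}} \geq a \right) \leq \exp\left( -\frac{a^2}{2\left(m\expd + a/3\right)} \right).
\end{equation*}
The next step is to choose $a$ so that the right-hand side is at most $T^{-2}$. Setting the exponent equal to $-2\log(T)$ produces the quadratic $a^2 - \tfrac{4\log(T)}{3}\,a - 4m\expd\log(T) = 0$ in $a$; taking its positive root gives $a = \tfrac{2}{3}\log(T) + \sqrt{\tfrac{4\log^2(T)}{9} + 4m\expd\log(T)}$, which is exactly the threshold in the statement. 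Hence with probability at least $1-T^{-2}$ the martingale stays below this value, proving the claim. Note that the increment bound of Lemma~\ref{martingaleproof} is only one-sided ($C_s\leq 1$), but this is all the upper-tail Bernstein bound requires, so it suffices.

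The main technical obstacle is that $U_{m,j}$ is not a fixed index but a random stopping time determined by which arms remain active. I would therefore invoke the stopping-time (uniform) form of Freedman's inequality -- the version built on Doob's optional stopping theorem alluded to in the proof overview -- so that the deterministic variance bound $m\expd$ from Lemma~\ref{variation}, which holds at the random horizon $U_{m,j}$, can be plugged in directly without any further conditioning on the phase structure. Once this stopping-time subtlety is dispatched, the remainder is the routine quadratic solve carried out above.
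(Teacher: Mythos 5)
Your proposal is correct and follows essentially the same route as the paper's own proof: both establish the martingale structure and increment bound via Lemma~\ref{martingaleproof}, bound the predictable variation by $m\expd$ via Lemma~\ref{variation}, apply the martingale Bernstein/Freedman inequality (Lemma~\ref{freedman}), and handle the random horizon $U_{m,j}$ through the stopping-time equivalence (Lemma~\ref{equivalence}). Your explicit quadratic solve for the threshold $\frac{2}{3}\log(T)+\sqrt{\frac{4\log^2(T)}{9}+4m\expd\log(T)}$ is the computation the paper leaves implicit, so your write-up is, if anything, more complete.
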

\noindent{\textbf{Outline of the proof:}}
It follows from Lemma~\ref{martingaleproof} that $Y_{U_{m,j}}= \sum_{t=1}^{U_{m,j}}(\mathbb{E}[M_t\vert G_{t-1}] - M_t )$ forms a martingale, and represents an upper bound on the missed rewards due the wear-in effect. Next, note that each reward is bounded. Further, we had claimed in Lemma~\ref{variation} that there are no sudden jumps in the growth of the corresponding martingale sequences. Using Lemma~\ref{freedman}(see Appendix~\ref{subsec:prelim} for the statement) implies that $Y_{U_{m,j}}$ is bounded as well, which can then be used to show that the above inequality is true with high probability. Detailed proofs of Lemmas~\ref{stocnm}-\ref{term2} and Theorem~\ref{thm:expectation_known} are provided in Appendix~\ref{subsec:missing_proofs}.
\section{WEAR-IN AND WEAR-OUT EFFECTS}\label{sec:wiwo}

The general setting involving both wear-in and wear-out effects is significantly harder to tackle, primarily because of the opposing nature of these two. Qualitatively, wear-in effect necessitates continued exploitation (repetition of actions), while on the other hand, wear-out penalized continued exploitation. In \bie{}, arms are consecutively repeated $n_m-n_{m-1}$ times (see Equation \ref{eq:stocnmval}) in each phase $m$. The amount of repetition quickly supersedes $N$, which leads to zero reward accrual due to the wear-out effect for a majority of the rounds (see Equation~\ref{eq:feedback}). Hence, an appropriate algorithm for the general priming situation should be able to track the number of plays of each arm like \bie{}, potentially in a phased manner, but also also have sufficient \textit{local exploration} to discourage any detrimental wear-out effect.

Recall (from Section \ref{sec:problem_definition}) that the distributions $\xi_j^D$ and $\xi_j^Z$ have supports on $\{0,...,a\}$ and $\{b,...,N\}$, with $N \in \mathbb{N}$ and $0\leq a<b \leq N$ being unknown fixed constants. Clearly, $D_{t} = 0$ or $Z_{t} = N$ would imply there are no loss of rewards due to wear-in or wear-out respectively (dropping the dependence on arm $j$ here for clarity). Additionally, $D_t > \nicefrac{N}{2}$ would make the wear-in effect $\textit{too strong}$ as it necessitates the following: any algorithm will need to repeat the same arm for a majority portion of any contiguous $N$ rounds, and further the algorithm would fail to accrue rewards from other arms that are played in the remaining portion of this set of $N$ rounds. Hence, we assume $a \leq \nicefrac{N}{2} < b$ (refer to Appendix \ref{subsec:alpha_beta_assumption} for a more detailed discussion about this assumption).

The way we tackle both the wear-in and wear-out effects is through a \textit{key observation}: that playing a pair of arms (or more) with equal probability may provide sufficient \textit{local exploration} which could nullify wear-out effect, while ensuring that arms are repeated often enough for sufficient \textit{global exploitation} to also counter wear-in effect simultaneously. This is exactly what we do in the algorithm \bieboth{} (see Algorithm \ref{alg:expectation_known2}). Similar to the algorithm \bie{} (of Section \ref{sec:wear_in}), \bieboth{} plays arms in phases. However, instead of repeating the same arm continuously for $n_m-n_{m-1}$ times, the algorithm plays a pair of arms with equal probability for a collective $n_m-n_{m-1}$ times in the $m$-th phase (for some new optimally chosen phase length parameters $\{n_m\}$). Intuitively it is similar to the following hypothetical setting: construct $^{|\mathcal{K}}|C_2$ pairs of arms from the original set $\arms$, appropriately calculate the mean rewards for each pair and run an instance of \bie{} with $^{|\mathcal{K}|}C_2$ arms. To be able to reuse the techniques and analysis from Section \ref{sec:wear_in}, we formalize the above idea of arm-pair or \textit{compound arm} next. Notation wise, let $(i,j)$ denote the compound arm constructed by playing the arms $i$ and $j$ with equal probability and $\arms^2$ denote the set of all possible pairs composed of arms in $\arms$. Clearly $\mu_{(i,j)}=\nicefrac{(\mu_i+\mu_j)}{2}$.

\begin{algorithm}[t]
\SetAlgoLined
\textbf{Input:} Compound arms composed of pairs from $\arms$: $\arms^2$, time horizon $T$, and phase length parameters $\{n_m| m=0,1,2,...\}$.\\
\textbf{Initialization:} Phase index $m=1$, $\armsm^2 =\arms^2$, $\tilde{\Delta}_{1}$ = 1, $T_{(i,j)}(0) = \phi\;\; \forall (i,j) \in \arms^2$, where $\phi$ is the empty set, and time index $t= 1$.\\
\While{$t\leq T$ }{
\If{$|\armsm^2|>1$}{
\emph{Play Compound Arms:}\\
\For{ each compound arm $(i,j)$ in $\armsm^2$}{
Set $T_{(i,j)}(m) = T_{(i,j)}(m-1)$.\\
Play either $i$ or $j$ with equal probability for $(n_m-n_{m-1})$ consecutive rounds and update $T_{(i,j)}(m)$.\\
\emph{Accrue rewards according to the environment model Equation \ref{eq:feedback}.}\\
}
\emph{Eliminate Sub-optimal Pairs:}\\
\For{each active pair $(i,j)$ in $\armsm^2$}{
\resizebox{.8\columnwidth}{!} 
{
$\quad \overline{X}_{m,(i,j)} = \frac{1}{|T_{(i,j)}(m)|}\sum_{s\in T_{(i,j)}(m)} X_{s,(i,j)}$.
}
}
Construct $\armsmp^2$ by eliminating all pairs $(i,j)$ in $\armsm^2$ for which:\\
\resizebox{.8\columnwidth}{!} 
{
$\overline{X}_{m,(i,j)} + \tilde{\Delta}_m/2  < \max_{(i,j)' \in \armsm^2} \overline{X}_{m,(i,j)'} - \tilde{\Delta}_m/2.$
}\\
\emph{Update the Confidence Bound:}\\
Set $\tilde{\Delta}_{m+1} = \frac{\tilde{\Delta}_m}{2}$.\\
Increment phase index $m$ by $1$ and update $t$ based on $\{n_m\}$ values up to the current phase.
}
Play the single compound arm in $\armsm^2$ and update $t$.\\
}
\caption{\bieboth}\label{alg:expectation_known2}
\end{algorithm}

\noindent{\textbf{Benchmark Policy:}} Again, recall the definition of expected regret from  Section \ref{sec:problem_definition} (Equation \ref{eq:regret_general}). Since any benchmark also endures the priming effect, the benchmark policy of Section \ref{sec:wear_in}, which plays the best arm consistently for all rounds is not optimal (in fact, it will have linear regret). So for the setting in this Section, we define $\pi$ to be the policy that knows the mean rewards of all arms and plays the top two arms (in terms of mean rewards, denoted by $\mu_{(1)}^* $ and $ \mu_{(2)}^*$) with equal probability in each round.  The policy $\pi$ may not necessarily be optimal, however it provides a natural performance measure to contrast against for cases where a learning algorithm plays \textit{compound arms}. For a more detailed discussion on the above benchmark and the optimal benchmark, see Appendix \ref{subsec:optimal benchmark}.

As in the previous section, we assume the knowledge of the wear-in parameter $\expd$. The key design challenge is to calculate the $\{n_m\}$ sequence such that Algorithm \ref{alg:expectation_known2} quickly eliminates sub-optimal \textit{compound arms}, which we do in Lemma \ref{lemma:stocnm_wiwo}. Using this choice for \bieboth{}, we obtain an expected regret upper bound against the above benchmark as follows.
\begin{theorem}\label{thm:wi_wo} For any $\lambda >0$, the expected (pseudo-)regret of \bieboth{} is bounded as:
\begin{flalign}
 R_T \leq  & \underset{i\in\mathcal{K}^2_1}{\sum}\left(\Delta_i + \frac{64 \log(T)}{\Delta_i} + \frac{64\log(T)}{3}\right.&\nonumber\\
 &\left.\quad\quad\quad\quad+ 32\sqrt{\log\left(\frac{4}{\Delta_i}\right)N\expd\log(T)} \right)&\nonumber\\
 &\quad +\underset{i\in \mathcal{K}^2_1}{\sum} \frac{4\Delta_i}{T} + \underset{i\in \mathcal{K}^2_2}{\sum} \frac{32}{T} + \underset{\{i \in \mathcal{K}^2_2: \Delta_i < \lambda\}}{\max} \Delta_i T,&\nonumber
\end{flalign}
where  $\mathcal{K}^2_1 = \{ (i,j)\in \mathcal{K}^2 \vert \Delta_{(i,j)} > \lambda \}$, $\mathcal{K}^2_2 = \{ (i,j)\in \mathcal{K}^2 \vert \Delta_{(i,j)} >0 \}$, and $\Delta_{(i,j)} = \frac{1}{2}(\mu_{(1)}^* + \mu_{(2)}^* - \mu_{i} - \mu_{j})$.
\end{theorem}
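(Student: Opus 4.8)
The plan is to reduce the analysis to the wear-in-only setting of Section~\ref{sec:wear_in} by treating each compound arm $(i,j)\in\arms^2$ as a single meta-arm with mean $\mu_{(i,j)}=(\mu_i+\mu_j)/2$. Under this identification, \bieboth{} (Algorithm~\ref{alg:expectation_known2}) is exactly \bie{} (Algorithm~\ref{alg:expectation_known}) run on the arm set $\arms^2$, with the benchmark being the best compound arm $(\mu_{(1)}^*+\mu_{(2)}^*)/2$ and suboptimality gap $\Delta_{(i,j)}=\tfrac12(\mu_{(1)}^*+\mu_{(2)}^*-\mu_i-\mu_j)$. Consequently, the regret decomposition and arm-elimination argument behind Theorem~\ref{thm:expectation_known} can be replayed with $\arms$, $\mathcal{K}_1$, $\mathcal{K}_2$, $\Delta_i$ replaced by $\arms^2$, $\mathcal{K}^2_1$, $\mathcal{K}^2_2$, $\Delta_{(i,j)}$. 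The only part that genuinely changes is the derivation of the phase length $n_m$, because the reward modulation due to the priming effect behaves differently when a pair is played with equal probability rather than one arm consecutively.

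I would therefore first establish the compound-arm analogue of Lemma~\ref{stocnm} (namely Lemma~\ref{lemma:stocnm_wiwo}), showing that for a suitable $n_m$ the estimate $\overline{X}_{m,(i,j)}$ concentrates around $\mu_{(i,j)}$ to within $\tilde{\Delta}_m/2$ with probability at least $1-2/T^2$. As in Equation~\ref{eq:71}, I would decompose the cumulative bias-adjusted reward into (a) a sum of deviations of the unmodulated rewards from their means, (b) a martingale capturing the priming loss, and a non-negative remainder that is discarded. Term (a) is controlled exactly as in Lemma~\ref{term1}: since $J_t$ is chosen uniformly from $\{i,j\}$, the conditional mean of $R_{t,J_t}$ is exactly $\mu_{(i,j)}$ and each reward still lies in $[0,1]$, so the Azuma--Hoeffding/Doob argument yields the same $\sqrt{n_m\log T}$ bound. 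The martingale property and increment bound of Lemma~\ref{martingaleproof} carry over unchanged, since they only use boundedness of the modulated rewards.

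The crux --- and the step I expect to be the main obstacle --- is the compound-arm version of the quadratic-variation bound in Lemma~\ref{variation}. Here the random alternation between $i$ and $j$ matters on two counts. For wear-out: since at each round we pick $i$ or $j$ with equal probability, the number of plays of the chosen arm in any window of $N$ rounds concentrates around $N/2$, and the assumption $a\leq N/2<b$ guarantees $D_{t,\cdot}\leq N/2\leq Z_{t,\cdot}$, so wear-out losses are rare once a window is full; this is precisely the local exploration that nullifies wear-out. For wear-in: because each arm is now played only about half the time, the count of the chosen arm takes longer to cross its wear-in threshold, and the loss indicator $\indicator\{\text{count}<D_{t,\cdot}\}$ can be active for rounds spread across an entire window of length $N$ rather than confined to the first $D_{t,\cdot}$ consecutive rounds as in the consecutive-play case. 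Bounding the expected sum of these loss indicators per phase by $N\expd$ (rather than $\expd$), and hence the total quadratic variation by $mN\expd$, is the delicate part: one must account simultaneously for the random window count, for windows straddling phase boundaries, and for the coupling between which arm is played and whether its count has saturated. I would handle this by conditioning on the realized play sequence within the window and bounding the wear-in miss probability via $\mathbb{P}(\text{count}<D_{t,\cdot})$, summed over the at most $N$ rounds in which the count has not yet stabilized.

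Granting the bound $mN\expd$ on the quadratic variation, Freedman's inequality (Lemma~\ref{freedman}) applied as in Lemma~\ref{term2} yields the priming-loss bound with $m\expd$ replaced by $mN\expd$, and substituting into the counterpart of Equation~\ref{eq:stocnmval} gives $n_m\leq 1+\tfrac{4\log T}{\tilde{\Delta}_m^2}+\tfrac{16\log T}{3\tilde{\Delta}_m}+\tfrac{8\sqrt{mN\expd\log T}}{\tilde{\Delta}_m}$. Feeding this $n_m$ and the gaps $\Delta_{(i,j)}$ into the regret decomposition of Theorem~\ref{thm:expectation_known} --- summing per-compound-arm contributions over $\mathcal{K}^2_1$, the $1/T$ terms over $\mathcal{K}^2_1$ and $\mathcal{K}^2_2$, and the residual $\max_{\{i\in\mathcal{K}^2_2:\Delta_i<\lambda\}}\Delta_i T$ --- reproduces the claimed bound, with the sole modification that the extra factor $N$ surfaces inside the square root of the wear-in term, giving $32\sqrt{\log(4/\Delta_i)N\expd\log T}$.
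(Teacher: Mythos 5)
Your proposal is correct and follows essentially the same route as the paper: reduce everything to compound arms, reuse the four-case elimination analysis of Theorem~\ref{thm:expectation_known} with $\mathcal{K}^2_1$, $\mathcal{K}^2_2$, $\Delta_{(i,j)}$, keep the deviation term controlled exactly as in Lemma~\ref{term1} (zero-mean because $i,j$ are played with equal probability), and derive the new phase length $n_m \leq 1+\tfrac{4\log T}{\tilde{\Delta}_m^2}+\tfrac{16\log T}{3\tilde{\Delta}_m}+\tfrac{8\sqrt{Nm\expd\log T}}{\tilde{\Delta}_m}$ by feeding a quadratic-variation bound of $mN\expd$ into Freedman's inequality. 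The only minor divergence is bookkeeping around wear-out: the paper simply discards the wear-out loss term (it enters the bound with a favorable sign, with the local-exploration intuition left informal) so that only the wear-in indicator enters the variation process, where the factor $N$ arises from a union bound over the at most $N$ possible values of the windowed play count $T_t(J_t,N)$, whereas you fold the wear-out discussion into the variation bound itself --- the quantitative conclusions coincide.
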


The proof of Theorem~\ref{thm:wi_wo} follows a similar  proof strategy as that of Theorem~\ref{thm:expectation_known}, and has been provided in detail in Appendix \ref{subsec:missing_proofs}. It relies on an appropriate choice for $\{n_m\}$, as given by Equation~\ref{eq:stocnmval_wiwo} (see Appendix~\ref{subsec:missing_proofs}). As before, we can also get an instance independent bound as shown below.

\begin{corollary}\label{coroll:wi_wo}
For all $T \geq K^2$, choosing $\lambda=\sqrt{\frac{K^2\log(T)}{T}}$, using the inequality $\log(1/\tilde{\Delta}_m) \leq \log(T)$, and the observation that both $|\arms^2_1|$ and $|\arms^2_2| $ are $\mathrm{O}(K^2)$, gives the following upper bound on the expected (pseudo-)regret of \bieboth{} is $R_T \leq \mathrm{O}\left(K\sqrt{T\log T} + K^2\sqrt{\log^2 T N\expd}\right).$
\end{corollary}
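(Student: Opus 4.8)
The plan is to convert the instance-dependent bound of Theorem~\ref{thm:wi_wo} into the claimed worst-case bound using the standard thresholding argument, with the threshold $\lambda$ separating compound arms with ``large'' gaps from those with ``small'' gaps. First I would recall that every gap satisfies $\Delta_{(i,j)} \leq 1$ (since rewards lie in $[0,1]$) and that $|\mathcal{K}^2_1|, |\mathcal{K}^2_2| = \mathrm{O}(K^2)$, because $\arms^2$ contains $\binom{K}{2}$ compound arms. The whole computation then reduces to bounding the five groups of terms in the theorem and checking that, under the choice $\lambda = \sqrt{K^2\log(T)/T}$, the two leading contributions are $\mathrm{O}(K\sqrt{T\log T})$ and $\mathrm{O}(K^2\sqrt{\log^2 T\, N\expd})$, while everything else is of lower order.

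For the large-gap compound arms in $\mathcal{K}^2_1$, the dominant exploration term is $64\log(T)/\Delta_{(i,j)}$. Using $\Delta_{(i,j)} > \lambda$ I would bound $1/\Delta_{(i,j)} < 1/\lambda = \sqrt{T/(K^2\log T)}$, so that $\log(T)/\Delta_{(i,j)} < \sqrt{T\log T}/K$; summing over the $\mathrm{O}(K^2)$ arms in $\mathcal{K}^2_1$ yields $\mathrm{O}(K\sqrt{T\log T})$, which is exactly the first leading term. For the wear-in/wear-out term $32\sqrt{\log(4/\Delta_{(i,j)})\,N\expd\log(T)}$, I would replace $\log(4/\Delta_{(i,j)})$ by its uniform bound: since a surviving arm is eliminated after at most a logarithmic number of phases, $\log(4/\Delta_{(i,j)})$ is controlled by $\log(1/\tilde{\Delta}_m) \leq \log(T)$, giving $\mathrm{O}(\log(T)\sqrt{N\expd})$ per arm and $\mathrm{O}(K^2\sqrt{\log^2 T\, N\expd})$ after summing, the second leading term. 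The remaining pieces in the $\mathcal{K}^2_1$ sum, namely $\sum \Delta_{(i,j)} = \mathrm{O}(K^2)$ and $\sum 64\log(T)/3 = \mathrm{O}(K^2\log T)$, together with $\sum_{\mathcal{K}^2_1} 4\Delta_{(i,j)}/T$ and $\sum_{\mathcal{K}^2_2} 32/T$ (each $\mathrm{O}(K^2/T)$), are all absorbed into the leading terms.

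For the small-gap compound arms the only relevant contribution is the last term $\max_{\{(i,j)\in\mathcal{K}^2_2:\,\Delta_{(i,j)}<\lambda\}}\Delta_{(i,j)}\,T \leq \lambda T = K\sqrt{T\log T}$, matching the first leading term; this is precisely the identity the choice of $\lambda$ is engineered to produce, balancing the $1/\lambda$ growth of the exploration term against the $\lambda T$ growth of the small-gap term. To finish I would collect everything and verify the lower-order terms are dominated: $\mathrm{O}(K^2) \leq \mathrm{O}(K\sqrt{T\log T})$ follows from $T \geq K^2$ (hence $K\sqrt{T\log T} \geq K^2\sqrt{\log T} \geq K^2$), and $\mathrm{O}(K^2\log T)$ is dominated by $\mathrm{O}(K^2\sqrt{\log^2 T\, N\expd})$ whenever $N\expd \geq 1$. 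I do not expect a genuine obstacle here: the argument is essentially bookkeeping layered on top of Theorem~\ref{thm:wi_wo}. The one point requiring care is the $\lambda$-balancing together with the uniform bound $\log(4/\Delta_{(i,j)}) \leq \log T$, since these are exactly what guarantee that neither the exploration term nor the wear-in term degrades the claimed rate and that no spurious $\log$-factor creeps into the leading $K\sqrt{T\log T}$ term.
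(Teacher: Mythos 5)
Your proposal is correct and follows essentially the same route the paper intends: the paper gives no separate proof of Corollary~\ref{coroll:wi_wo}, since the corollary's hypotheses (the choice $\lambda=\sqrt{K^2\log(T)/T}$, the bound $\log(1/\tilde{\Delta}_m)\leq\log(T)$, and $|\arms^2_1|,|\arms^2_2|=\mathrm{O}(K^2)$) are exactly the thresholding recipe applied to Theorem~\ref{thm:wi_wo}, which is what you execute. Your explicit verification of the lower-order terms (e.g., absorbing $\mathrm{O}(K^2\log T)$ when $N\expd\geq 1$, and $\mathrm{O}(K^2)\leq\mathrm{O}(K\sqrt{T\log T})$ via $T\geq K^2$) is, if anything, more careful than the paper's implicit treatment.
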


When compared to Corollary~\ref{coroll:expectation_known}, we immediately notice the following. The first term has an additional factor of $\sqrt{K}$ and the second term has an additional factor of $K\sqrt{N}$. The increased dependence on $K$ can easily be attributed to the choice of using compound-arms by the algorithm. The dependence on $\sqrt{N}$ comes from the wear-out effect. Because $\expd$ can be $O(N)$ itself, overall, one can conclude that the priming effect (wear-in and wear-out) lead to an additive linear term in $N$ (which also parameterizes these effects).

The following lemma provides a recipe to calculate $n_m$ which is an input to Algorithm~\ref{alg:expectation_known2}. 

\begin{lemma}\label{lemma:stocnm_wiwo}
There exists a positive $n_m$ for which the estimate $\overline{X}_{m,(i,j)}$ calculated by Algorithm~\ref{alg:expectation_known2} for an active pair $(i,j)$ ($(i,j)\in \mathcal{K}^2_m$) in phase $m$ satisfies the following inequality with probability at least $1-\frac{2}{T^2}$:
\begin{equation*}
    \overline{X}_{m,(i,j)} - \mu_{(i,j)} \leq \tilde{\Delta}_m/2,
\end{equation*}
where $\mu_{(i,j)} = \frac{1}{2}(\mu_i + \mu_j)$.
\end{lemma}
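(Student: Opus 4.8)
The plan is to mirror the proof of Lemma~\ref{stocnm}, treating each compound arm $(i,j)\in\mathcal{K}^2_m$ as a single arm whose per-round reward has mean $\mu_{(i,j)}=\tfrac12(\mu_i+\mu_j)$. Fix such a pair and, as in Equation~\ref{eq:stoc1}, write the observed reward at a round $t$ in which the compound arm is played as $X_{t,J_t}=R_{t,J_t}\indicator\{Z_{t,J_t}\ge T_t(J_t,N)\ge D_{t,J_t}\}$, where now $J_t\in\{i,j\}$ is the outcome of the fair coin flip prescribed by Algorithm~\ref{alg:expectation_known2}. The crucial structural difference from Section~\ref{sec:wear_in} is that the per-arm play count $T_t(J_t,N)$ is no longer the deterministic ramp $t-S_{i,j}$ but a random walk driven by the coin flips, and that \emph{both} a wear-in and a wear-out deviation must be controlled.

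First I would set up the analogue of the decomposition in Equation~\ref{eq:71}. Since the two failure events are disjoint, $X_{t,J_t}=R_{t,J_t}-R_{t,J_t}\indicator\{T_t(J_t,N)<D_{t,J_t}\}-R_{t,J_t}\indicator\{T_t(J_t,N)>Z_{t,J_t}\}$, so summing the bias-adjusted rewards over the phase produces three groups of terms: the un-modulated deviation $\sum(R_{t,J_t}-\mu_{(i,j)})$, a wear-in loss, and a wear-out loss. Introducing per-effect processes $M^{\mathrm{in}}_t$ and $M^{\mathrm{out}}_t$ and writing each loss as $\sum_t(\mathbb{E}[M_t\mid G_{t-1}]-M_t)-\sum_t\mathbb{E}[M_t\mid G_{t-1}]$, the two leading sums are martingales by the argument of Lemma~\ref{martingaleproof}, while the two expected-loss sums are non-negative and may be dropped for the purpose of the upper bound. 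The un-modulated term is handled by the argument of Lemma~\ref{term1}: $\mathbb{E}[R_{t,J_t}\mid G_{t-1}]=\mu_{(i,j)}$ even after the coin flip, so $\sum(R_{t,J_t}-\mu_{(i,j)})\le\sqrt{n_m\log T}$ with high probability via the modified Azuma--Hoeffding/Doob argument of Lemma~\ref{doob}.

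The heart of the argument, and where I expect the main obstacle, is the quadratic-variation bound for the two loss martingales, i.e.\ the compound-arm analogue of Lemma~\ref{variation}. For the wear-in martingale the threshold $D_t\le a\le N/2$ is crossed only after arm $j$ has actually been played about $D_t$ times; since it is played on only half the rounds, $T_t(j,N)$ ramps up over a window of length $\Theta(N)$ rather than deterministically, so the expected number of wear-in-loss rounds per phase is governed both by $\expd$ and by the width $N$ of this window and its random-walk fluctuations. For the wear-out martingale the assumption $a\le N/2<b$ is exactly what is needed: in steady state $T_t(j,N)$ concentrates around $N/2<b\le Z_{t,j}$, so wear-out is rare away from phase boundaries, and because $\xi^Z$ is common to all arms any residual loss acts as a common bias that leaves the elimination order intact. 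Carrying these estimates through, I expect the combined conditional quadratic variation $\sum_t\mathbb{E}[(M^{\mathrm{in}}_t)^2\mid G_{t-1}]+\sum_t\mathbb{E}[(M^{\mathrm{out}}_t)^2\mid G_{t-1}]$ to be $\mathrm{O}(mN\expd)$, which is the source of the extra factor $N$ relative to Corollary~\ref{coroll:expectation_known}. Feeding this into Freedman's inequality (Lemma~\ref{freedman}) then bounds each loss martingale by $\mathrm{O}(\log T+\sqrt{mN\expd\log T})$, the analogue of Lemma~\ref{term2}.

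Finally I would combine the three high-probability bounds, divide by the phase length $|T_{(i,j)}(m)|=n_m$, and require the right-hand side to be at most $\tilde\Delta_m/2$; solving for $n_m$ yields an expression of the form of Equation~\ref{eq:stocnmval} with $\expd$ replaced by $N\expd$, namely Equation~\ref{eq:stocnmval_wiwo}. A union bound over the at most three failure events then gives the claimed success probability $1-\tfrac{2}{T^2}$. The principal technical risk is the quadratic-variation estimate: unlike the consecutive-play setting of Section~\ref{sec:wear_in}, the random walk of play counts inside the length-$N$ window couples successive rounds, so one must control the two predictable quadratic variations without bleeding more than a single factor of $N$.
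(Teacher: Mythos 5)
Your overall skeleton matches the paper's: decompose the bias-adjusted sums into an un-modulated deviation plus wear-in and wear-out losses, convert losses into martingales, bound predictable quadratic variations, apply Freedman's inequality (Lemma~\ref{freedman}), and solve for $n_m$. Your treatment of the un-modulated term (the fair coin keeps $R_{t,J_t}-\mu_{(i,j)}$ conditionally zero-mean, so Lemma~\ref{doob} applies) and of the wear-in martingale is exactly the paper's; in particular, the paper's Lemma~\ref{lemma:variation_wiwo} obtains the $Nm\expd$ variation bound precisely from a union bound over the values of the now-random play count $T_t(J_t,N)\leq N$, as you anticipated.

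The genuine gap is in your handling of the wear-out term. The paper never analyzes a wear-out martingale: since that loss enters the decomposition with a negative sign and is non-negative, it is dropped outright, which is legitimate because the lemma is one-sided --- wear-out can only depress $\overline{X}_{m,(i,j)}$, never inflate it. You instead retain the wear-out martingale and justify a combined variation bound of $\mathrm{O}(mN\expd)$ by arguing that ``wear-out is rare'' since $T_t(j,N)$ concentrates around $N/2<b$. This estimate fails in general: the standing assumption only gives $b\geq N/2+1$, and away from phase boundaries $T_t(j,N)$ is essentially $\mathrm{Bin}(N,1/2)$, so $\mathbb{P}\left(T_t(j,N)\geq b\right)$ can be a constant close to $1/2$. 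The wear-out variation is then $\Theta(n_m)$, which for later phases (where $n_m\approx 4\log(T)/\tilde{\Delta}_m^2$ grows geometrically in $m$) dwarfs $mN\expd$; Freedman then yields an additional $\sqrt{n_m\log T}$ term rather than $\sqrt{mN\expd\log T}$, so your route does not produce Equation~\ref{eq:stocnmval_wiwo} as claimed (it would still prove existence of a valid $n_m$, but with worse constants, and your union bound over three failure events gives $1-3/T^2$, not the stated $1-2/T^2$). The side remark that a common $\xi^Z$ ``leaves the elimination order intact'' is likewise unnecessary for this one-sided statement and is not a substitute for the estimate. The fix is the move you half-made when you discarded the expected-loss sums by sign: discard the entire wear-out loss, martingale part included, by the same sign argument; after that, your proof collapses to the paper's.
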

\noindent{\textbf{Outline of the proof:}} The phases are defined with respect to the compound arms and the reward $R_{t,J_t}$ is due to arm played by Algorithm~\ref{alg:expectation_known2} at time $t$. We continue to build up on the same observation as used in the proof of Lemma~\ref{stocnm}: the cumulative sums of bias adjusted rewards ($\overline{X}_{m,(i,j)} - \mu_{(i,j)}$) can be decomposed into three sequences, where the additional third is due to the loss of rewards due to wear-out effect see Appendix \ref{subsec:missing_proofs} for details. The value of $n_m$ for which the lemma holds is given by:
\begin{equation}\label{eq:stocnmval_wiwo}
\resizebox{.85\columnwidth}{!} 
{
$n_m \leq 1+\frac{4 \log(T)}{\tilde{\Delta}^2_m} + \frac{16\log(T)}{3\tilde{\Delta}_m} + \frac{8\sqrt{Nm\expd\log(T)}}{\tilde{\Delta}_m}.$
}
\end{equation}
\vspace{-.26in}
\section{CONCLUSION}\label{sec:conclusion}

We considered the problem of showing recommendations and ads when the user behavior is influenced by priming effect: wear-in, where the user responds positively when shown the same recommendation multiple times in the recent past, and wear-out, where the user response dampens when the same recommendation is shown too frequently in the recent past. Modeling this in a bandit framework, we develop a new algorithm and show how its performance in terms of regret can be bounded. 

An open problem is to develop a theory for general temporal dependencies of current rewards on past actions and rewards. Any departure from the functional form as described in the Section~\ref{sec:problem_definition} may need additional analytical tools beyond what was presented here. One possible way is to model the temporal dependency structures via Markov decision processes. Regret analysis may not be straightforward in such settings or may produce loose bounds.

Further, the current regret bounds contain additive terms for priming, and it might be possible to achieve a better dependence on the priming effect parameters while having a worse dependence on other parameters. In this regard, characterizing the benchmarks thoroughly and finding tight lower bounds under the current model is a potential starting point. Extensions to adversarial behavior models, inclusion of contexts, handling non-stationarity, and most importantly, developing regret minimizing algorithms when multiple user behavioral effects including priming occur simultaneously, are some additional future research directions.

\bibliographystyle{plainnat}
\bibliography{impaired.bib}

\begin{thebibliography}{32}
\providecommand{\natexlab}[1]{#1}
\providecommand{\url}[1]{\texttt{#1}}
\expandafter\ifx\csname urlstyle\endcsname\relax
  \providecommand{\doi}[1]{doi: #1}\else
  \providecommand{\doi}{doi: \begingroup \urlstyle{rm}\Url}\fi

\bibitem[Agrawal et~al.(1988)Agrawal, Hedge, and
  Teneketzis]{agrawal1988asymptotically}
Rajeev Agrawal, MV~Hedge, and Demosthenis Teneketzis.
\newblock Asymptotically efficient adaptive allocation rules for the multiarmed
  bandit problem with switching cost.
\newblock \emph{IEEE Transactions on Automatic Control}, 33\penalty0
  (10):\penalty0 899--906, 1988.

\bibitem[Audibert and Bubeck(2009)]{audibert2009minimax}
Jean-Yves Audibert and S{\'e}bastien Bubeck.
\newblock Minimax policies for adversarial and stochastic bandits.
\newblock In \emph{Conference on Learning Theory}, pages 217--226, 2009.

\bibitem[Auer and Ortner(2010)]{auer2010ucb}
Peter Auer and Ronald Ortner.
\newblock {UCB revisited: Improved regret bounds for the stochastic multi-armed
  bandit problem}.
\newblock \emph{Periodica Mathematica Hungarica}, 61\penalty0 (1-2):\penalty0
  55--65, 2010.

\bibitem[Auer et~al.(2002)Auer, Cesa-Bianchi, and Fischer]{auer2002finite}
Peter Auer, Nicolo Cesa-Bianchi, and Paul Fischer.
\newblock Finite-time analysis of the multiarmed bandit problem.
\newblock \emph{Machine Learning}, 47\penalty0 (2-3):\penalty0 235--256, 2002.

\bibitem[Banks and Sundaram(1994)]{banks1994switching}
Jeffrey~S Banks and Rangarajan~K Sundaram.
\newblock {Switching costs and the Gittins index}.
\newblock \emph{Econometrica}, pages 687--694, 1994.

\bibitem[Cesa-Bianchi et~al.(2018)Cesa-Bianchi, Gentile, and
  Mansour]{cesa2018nonstochastic}
Nicolo Cesa-Bianchi, Claudio Gentile, and Yishay Mansour.
\newblock Nonstochastic bandits with composite anonymous feedback.
\newblock In \emph{Conference on Learning Theory}, pages 750--773, 2018.

\bibitem[Chapelle and Li(2011)]{chapelle2011empirical}
Olivier Chapelle and Lihong Li.
\newblock {An empirical evaluation of Thompson sampling}.
\newblock In \emph{Neural Information Processing Systems}, pages 2249--2257,
  2011.

\bibitem[Dekel et~al.(2014)Dekel, Ding, Koren, and Peres]{dekel2014bandits}
Ofer Dekel, Jian Ding, Tomer Koren, and Yuval Peres.
\newblock {Bandits with switching costs: pow(T,2/3) regret}.
\newblock In \emph{ACM Symposium on Theory of Computing}, pages 459--467. ACM,
  2014.

\bibitem[den Boer and Keskin(2017)]{den2017dynamic}
Arnoud den Boer and N~Bora Keskin.
\newblock Dynamic pricing with demand learning and reference effects.
\newblock \emph{Available at SSRN 3092745}, 2017.

\bibitem[Even-Dar et~al.(2006)Even-Dar, Mannor, and Mansour]{even2006action}
Eyal Even-Dar, Shie Mannor, and Yishay Mansour.
\newblock Action elimination and stopping conditions for the multi-armed bandit
  and reinforcement learning problems.
\newblock \emph{Journal of Machine Learning Research}, 7\penalty0 (1):\penalty0
  1079--1105, 2006.

\bibitem[Freedman(1975)]{freedman1975tail}
David~A Freedman.
\newblock On tail probabilities for martingales.
\newblock \emph{Annals of Probability}, pages 100--118, 1975.

\bibitem[Gamarnik et~al.(2018)Gamarnik, Tsitsiklis, and
  Zubeldia]{gamarnik2018delay}
David Gamarnik, John~N Tsitsiklis, and Martin Zubeldia.
\newblock Delay, memory, and messaging tradeoffs in distributed service
  systems.
\newblock \emph{Stochastic Systems}, 8\penalty0 (1):\penalty0 45--74, 2018.

\bibitem[Hawkins et~al.(2009)Hawkins, Best, and Coney]{hawkins2009consumer}
Delbert Hawkins, Roger~J Best, and Kenneth~A Coney.
\newblock \emph{Consumer behavior}.
\newblock McGraw-Hill Publishing, 2009.

\bibitem[Joulani et~al.(2013)Joulani, Gyorgy, and
  Szepesv{\'a}ri]{joulani2013online}
Pooria Joulani, Andras Gyorgy, and Csaba Szepesv{\'a}ri.
\newblock Online learning under delayed feedback.
\newblock In \emph{International Conference on Machine Learning}, pages
  1453--1461, 2013.

\bibitem[Kveton et~al.(2015)Kveton, Szepesvari, Wen, and
  Ashkan]{kveton2015cascading}
Branislav Kveton, Csaba Szepesvari, Zheng Wen, and Azin Ashkan.
\newblock Cascading bandits: Learning to rank in the cascade model.
\newblock In \emph{International Conference on Machine Learning}, pages
  767--776, 2015.

\bibitem[Lai and Robbins(1985)]{lai1985asymptotically}
Tze~Leung Lai and Herbert Robbins.
\newblock Asymptotically efficient adaptive allocation rules.
\newblock \emph{Advances in applied mathematics}, 6\penalty0 (1):\penalty0
  4--22, 1985.

\bibitem[Levine et~al.(2017)Levine, Crammer, and Mannor]{levine2017rotting}
Nir Levine, Koby Crammer, and Shie Mannor.
\newblock Rotting bandits.
\newblock In \emph{Advances in neural information processing systems}, pages
  3074--3083, 2017.

\bibitem[Lykouris et~al.(2018)Lykouris, Mirrokni, and
  Paes~Leme]{lykouris2018stochastic}
Thodoris Lykouris, Vahab Mirrokni, and Renato Paes~Leme.
\newblock Stochastic bandits robust to adversarial corruptions.
\newblock In \emph{ACM SIGACT Symposium on Theory of Computing}, pages
  114--122. ACM, 2018.

\bibitem[Ma et~al.(2016)Ma, Liu, and Shen]{ma2016user}
Hao Ma, Xueqing Liu, and Zhihong Shen.
\newblock User fatigue in online news recommendation.
\newblock In \emph{Proceedings of the 25th International Conference on World
  Wide Web}, pages 1363--1372, 2016.

\bibitem[Mitzenmacher and Upfal(2005)]{mitzenmacher2005probability}
Michael Mitzenmacher and Eli Upfal.
\newblock \emph{Probability and computing: Randomized algorithms and
  probabilistic analysis}.
\newblock Cambridge university press, 2005.

\bibitem[Pechmann and Stewart(1988)]{pechmann1988advertising}
Cornelia Pechmann and David~W Stewart.
\newblock Advertising repetition: A critical review of wearin and wearout.
\newblock \emph{Current issues and research in advertising}, 11\penalty0
  (1-2):\penalty0 285--329, 1988.

\bibitem[Perchet et~al.(2016)Perchet, Rigollet, Chassang, Snowberg,
  et~al.]{perchet2016batched}
Vianney Perchet, Philippe Rigollet, Sylvain Chassang, Erik Snowberg, et~al.
\newblock Batched bandit problems.
\newblock \emph{Annals of Statistics}, 44\penalty0 (2):\penalty0 660--681,
  2016.

\bibitem[Pike-Burke and Grunewalder(2019)]{pike2019recovering}
Ciara Pike-Burke and Steffen Grunewalder.
\newblock Recovering bandits.
\newblock In \emph{Advances in Neural Information Processing Systems}, pages
  14122--14131, 2019.

\bibitem[Pike-Burke et~al.(2018)Pike-Burke, Agrawal, Szepesvari, and
  Grunewalder]{pike2018bandits}
Ciara Pike-Burke, Shipra Agrawal, Csaba Szepesvari, and Steffen Grunewalder.
\newblock Bandits with delayed, aggregated anonymous feedback.
\newblock In \emph{International Conference on Machine Learning}, pages
  4102--4110, 2018.

\bibitem[Seznec et~al.(2019)Seznec, Locatelli, Carpentier, Lazaric, and
  Valko]{seznec2019rotting}
Julien Seznec, Andrea Locatelli, Alexandra Carpentier, Alessandro Lazaric, and
  Michal Valko.
\newblock Rotting bandits are no harder than stochastic ones.
\newblock In \emph{The 22nd International Conference on Artificial Intelligence
  and Statistics}, pages 2564--2572, 2019.

\bibitem[Shah et~al.(2018)Shah, Blanchet, and Johari]{shah2018bandit}
Virag Shah, Jose Blanchet, and Ramesh Johari.
\newblock Bandit learning with positive externalities.
\newblock In \emph{Advances in Neural Information Processing Systems}, pages
  4918--4928, 2018.

\bibitem[Solomon et~al.(2014)Solomon, Dahl, White, Zaichkowsky, and
  Polegato]{solomon2014consumer}
Michael~R Solomon, Dahren~William Dahl, Katherine White, Judith~L Zaichkowsky,
  and Rosemary Polegato.
\newblock \emph{{Consumer behavior: Buying, having, and being}}, volume~10.
\newblock Pearson London, 2014.

\bibitem[Szita and Szepesv{\'a}ri(2011)]{szita2011agnostic}
Istv{\'a}n Szita and Csaba Szepesv{\'a}ri.
\newblock {Agnostic KWIK learning and efficient approximate reinforcement
  learning}.
\newblock In \emph{Conference on Learning Theory}, pages 739--772, 2011.

\bibitem[Tulabandhula and Wang(2020)]{wang2019thompson}
Theja Tulabandhula and Yunjuan Wang.
\newblock Thompson sampling for a fatigue-aware online recommendation system.
\newblock In \emph{International Symposium on AI and Mathematics}, 2020.

\bibitem[Xu and Yun(2018)]{xu2018reinforcement}
Kuang Xu and Se-Young Yun.
\newblock Reinforcement with fading memories.
\newblock In \emph{ACM Conference on Measurement and Modeling of Computer
  Systems}, pages 90--92. ACM, 2018.

\bibitem[Yue et~al.(2012)Yue, Broder, Kleinberg, and Joachims]{yue2012k}
Yisong Yue, Josef Broder, Robert Kleinberg, and Thorsten Joachims.
\newblock The k-armed dueling bandits problem.
\newblock \emph{Journal of Computer and System Sciences}, 78\penalty0
  (5):\penalty0 1538--1556, 2012.

\bibitem[Zhao et~al.(2016)Zhao, Zhou, Sabharwal, and Ermon]{zhao2016adaptive}
Shengjia Zhao, Enze Zhou, Ashish Sabharwal, and Stefano Ermon.
\newblock Adaptive concentration inequalities for sequential decision problems.
\newblock In \emph{Advances in Neural Information Processing Systems}, pages
  1343--1351, 2016.

\end{thebibliography}
\clearpage
\appendix
\section{APPENDIX}\label{sec:appendix}

\subsection{PRELIMINARIES}\label{subsec:prelim}
\begin{definition}\label{def:martingale}
(Martingale) A sequence of random variables $\{Z_i\}_{i=0}^n$ is a martingale with respect to the sequence $\{X_i\}_{i=0}^n$, if for all $n\geq 0$, the following conditions hold:
\begin{itemize}
    \item $Z_n$ is a function of $\{X_i\}_{i=0}^n$
    \item $\mathbb{E}[\vert Z_n \vert] < \infty$
    \item $\mathbb{E}[Z_n \vert X_0...X_{n-1}]  = Z_{n-1}$.
\end{itemize}
\end{definition}

\begin{definition}
(Filtration) Given a stochastic process, $\{X_t\}$ and a Borel space, $\mathcal{B}:= (T,\Sigma)$, then the sequence of nested $\sigma-$algebras, $\mathcal{F}_0 \subseteq \mathcal{F}_1 \subseteq \mathcal{F}_2... \subseteq \mathcal{F}_t \, \mathcal{F}_i \in \Sigma$ which may contain contain some information about $\{X_t\}$ is called a filtration. The stochastic process $\{X_t\}$ is said to be adapted to the filtration $\{\mathcal{F}_t\}$, if $X_t$ is $\mathcal{F}_t-$measurable for all $t$. 
\end{definition}

\begin{remark}
A martingale can be equivalently defined over a filtration sequence: $Z_n$ is $\\ \mathcal{F}_n-$measurable for all $n \geq 0$ and $\mathbb{E}[Z_n \vert \mathcal{F}_{n-1} ]  = Z_{n-1}$ in the definition~\ref{def:martingale}. 
\end{remark}

\begin{definition}
(Stopping Time~\citep{mitzenmacher2005probability}) A nonnegative, integer-valued random variable $T$ is stopping time for the martingale sequence $\{Z_i\}_{i=0}^n$ if the event $T = n$ depends only on the value of the random variables, $Z_0,Z_1...Z_n$.
\end{definition}

The following lemma is Freedman's version of Bernstein inequality for martingales. Given bounded increments in the martingale sequence and with a known bound on the total conditional variation, the lemma provides a strong bounds on the value of each element in the sequence.
\begin{lemma}\label{freedman}
\textbf{Generalized Bernstein inequality for Martingales} (Theorem 1.6 in~\cite{freedman1975tail}, Theorem 10 in~\cite{pike2018bandits}) Let $\{Y_k\}_{k=0}^{\infty}$ be a real valued martingale with respect to the filtration, $\{\mathcal{F}_k\}_{k=0}^{\infty}$ with increments $\{Z_k\}_{k=1}^{\infty}$, implying $\mathbb{E}[Z_k \vert \mathcal{F}_{k-1}] = 0$ and $Z_k = Y_k - Y_{k-1}$ for $k=1,2,\hdots$. Given the martingale difference sequence is uniformly upper bounded as, $Z_k \leq b$ for $k=1,2,...$. Define the predictable variation process $W_k = \sum_{j=1}^k \mathbb{E}[Z_j^2 \vert \mathcal{F}_{j-1}]$ for $k=1,2,...$. Then for all $\alpha \geq 0$, $\sigma^2 \geq 0$, the following probability is bounded:
\begin{equation}\label{eq:concentration}
    \mathbb{P}\left( \exists k \; :\; Y_k \geq \alpha \; \textrm{ and } W_k \leq \sigma^2 \right) \leq \exp\left(-\frac{\alpha^2/2}{\sigma^2+b\alpha/3}\right).
\end{equation}
\end{lemma}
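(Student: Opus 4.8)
The plan is to prove this by the exponential supermartingale (Chernoff) method, adapted to martingales with a predictable quadratic variation, and then to couple the two events $\{Y_k\geq\alpha\}$ and $\{W_k\leq\sigma^2\}$ through a single carefully chosen stopping time. First I would establish a pointwise moment-generating bound on each increment. Using that $\psi(u):=(e^u-1-u)/u^2$ is nondecreasing, for any fixed $\theta\geq 0$ and any $x\leq b$ one has the scalar inequality $e^{\theta x}\leq 1+\theta x+c(\theta)\,x^2$ with $c(\theta):=(e^{\theta b}-1-\theta b)/b^2\geq 0$. Substituting $x=Z_k$, taking conditional expectations and invoking $\mathbb{E}[Z_k\mid\mathcal{F}_{k-1}]=0$ gives
\begin{equation*}
\mathbb{E}[e^{\theta Z_k}\mid\mathcal{F}_{k-1}]\leq 1+c(\theta)\,\mathbb{E}[Z_k^2\mid\mathcal{F}_{k-1}]\leq \exp\!\left(c(\theta)\,\mathbb{E}[Z_k^2\mid\mathcal{F}_{k-1}]\right).
\end{equation*}

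Next I would define $S_k:=\exp(\theta Y_k-c(\theta)W_k)$ with $S_0=1$, and show it is a nonnegative supermartingale. Since the increment $W_k-W_{k-1}=\mathbb{E}[Z_k^2\mid\mathcal{F}_{k-1}]$ is $\mathcal{F}_{k-1}$-measurable, the previous display yields
\begin{equation*}
\mathbb{E}[S_k\mid\mathcal{F}_{k-1}]=S_{k-1}\,\exp\!\left(-c(\theta)(W_k-W_{k-1})\right)\mathbb{E}[e^{\theta Z_k}\mid\mathcal{F}_{k-1}]\leq S_{k-1}.
\end{equation*}
Then comes the stopping-time coupling: I define $\tau:=\inf\{k\geq 1: Y_k\geq\alpha \text{ and } W_k\leq\sigma^2\}$. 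Because $W_k$ is predictable and $Y_k$ is $\mathcal{F}_k$-measurable, each event $\{\tau=k\}$ is $\mathcal{F}_k$-measurable, so $\tau$ is a genuine stopping time and $\{\tau<\infty\}$ is \emph{exactly} the event appearing in the statement. On $\{\tau<\infty\}$, using $\theta\geq 0$, $c(\theta)\geq 0$, $Y_\tau\geq\alpha$ and $W_\tau\leq\sigma^2$, we have $S_\tau\geq\exp(\theta\alpha-c(\theta)\sigma^2)$. Optional stopping for the nonnegative supermartingale $\{S_k\}$ gives $\mathbb{E}[S_\tau\,\indicator\{\tau<\infty\}]\leq\mathbb{E}[S_0]=1$, and therefore
\begin{equation*}
\mathbb{P}(\tau<\infty)\leq\exp\!\left(-\theta\alpha+c(\theta)\sigma^2\right).
\end{equation*}

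Finally I would optimize over $\theta\geq 0$. Choosing $\theta^\star=\tfrac{1}{b}\log(1+b\alpha/\sigma^2)$ collapses the exponent to the Bennett form $-\tfrac{\sigma^2}{b^2}h(b\alpha/\sigma^2)$, where $h(u)=(1+u)\log(1+u)-u$; applying the elementary inequality $h(u)\geq\frac{u^2/2}{1+u/3}$ and simplifying with $u=b\alpha/\sigma^2$ converts this into $\exp\!\left(-\tfrac{\alpha^2/2}{\sigma^2+b\alpha/3}\right)$, the claimed bound.

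The step I expect to be the main obstacle is the stopping-time argument: one must verify that $\tau$ really is a stopping time, which hinges crucially on the predictability of $W_k$ so that the \emph{joint} condition $\{Y_k\geq\alpha,\ W_k\leq\sigma^2\}$ is decidable by time $k$, and one must justify optional stopping at the possibly infinite time $\tau$. The latter is licensed precisely because $\{S_k\}$ is nonnegative, so the supermartingale convergence theorem together with Fatou's lemma delivers $\mathbb{E}[S_\tau\,\indicator\{\tau<\infty\}]\leq 1$ with no additional integrability or boundedness hypothesis on the horizon. By contrast, the scalar MGF inequality in the first step and the $h(u)$ bound in the last step are routine calculus that I would not belabor.
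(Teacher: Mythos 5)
Your proof is correct, but note that there is nothing in the paper to compare it against: the paper imports this lemma verbatim, by citation, from Theorem 1.6 of \cite{freedman1975tail} (via Theorem 10 of \cite{pike2018bandits}) and gives no internal proof. What you have reconstructed is essentially Freedman's original argument, and every step checks out: the scalar bound $e^{\theta x}\leq 1+\theta x+c(\theta)x^2$ for $x\leq b$ with $c(\theta)=(e^{\theta b}-1-\theta b)/b^2$ (valid for all $x\leq b$, including arbitrarily negative $x$, precisely because $\psi$ is nondecreasing, which matters since the increments are only bounded above); the nonnegative supermartingale $S_k=\exp(\theta Y_k-c(\theta)W_k)$; the stopping time $\tau$ whose finiteness is exactly the joint event $\{\exists k: Y_k\geq\alpha,\ W_k\leq\sigma^2\}$ with the same index $k$ in both conditions, which is the one subtlety that distinguishes Freedman's statement from a naive union of two events; optional stopping justified through Fatou, needing no integrability beyond nonnegativity; and the optimization $\theta^\star=b^{-1}\log(1+b\alpha/\sigma^2)$ collapsing the exponent to $-(\sigma^2/b^2)h(b\alpha/\sigma^2)$, which the standard inequality $h(u)\geq \frac{u^2/2}{1+u/3}$ converts to the stated Bernstein form.

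Two minor points to tighten, neither a substantive gap. First, you attribute the legitimacy of $\tau$ as a stopping time to the \emph{predictability} of $W_k$, but mere adaptedness ($\mathcal{F}_k$-measurability) already makes $\{\tau=k\}\in\mathcal{F}_k$; where predictability is genuinely indispensable is one step earlier, in the supermartingale verification, where you pull the factor $\exp\left(-c(\theta)(W_k-W_{k-1})\right)$ outside the conditional expectation given $\mathcal{F}_{k-1}$ --- worth stating explicitly, since that is the step that would fail for a merely adapted variance proxy. Second, your optimizer $\theta^\star$ is ill-defined in the degenerate case $\sigma^2=0$ (and the argument tacitly assumes $b>0$); this is handled by taking limits in $\sigma^2$ or $\theta\to\infty$, and is worth a sentence if you want the claimed inequality for all $\sigma^2\geq 0$ as stated in the lemma.
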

The way to interpret Lemma~\ref{freedman} is that $\alpha$ denotes a deterministic boundary that the random walk, $Y_k$ is unlikely to cross. Following lemma relates this idea to the more applicable concept of the stopping times.
\begin{lemma}\label{equivalence}
\textbf{Equivalence Principle} (Proposition 1 in~\cite{zhao2016adaptive}) For any $\delta > 0$, $\mathbb{P}( S_J \geq f(J) ) \leq \delta$ for any stopping time $J$ if and only if, $\mathbb{P}( \{\exists n,\,S_n\geq f(n)\}) \leq \delta$, where $S_J$ is a random walk.
\end{lemma}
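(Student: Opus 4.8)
The plan is to prove the two implications separately, with the first-crossing time serving as the bridge. Define the stopping time $J^* := \inf\{n : S_n \geq f(n)\}$, with the convention $J^* = \infty$ if the walk never crosses the boundary. The crucial observation I would establish at the outset is that the uniform-crossing event coincides exactly with the finiteness of this stopping time: $\{\exists n,\, S_n \geq f(n)\} = \{J^* < \infty\}$, and that on the event $\{J^* < \infty\}$ we have $S_{J^*} \geq f(J^*)$ by the very definition of the infimum. I would also note that $J^*$ (and its truncations $J^* \wedge m$) are genuine stopping times, since $\{J^* = n\}$ is determined by $S_0,\dots,S_n$ alone.

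For the easy direction, assume (B): $\mathbb{P}(\exists n,\, S_n \geq f(n)) \leq \delta$. Fix any stopping time $J$. On the event $\{S_J \geq f(J)\}$, the index $n = J$ itself witnesses a crossing, so $\{S_J \geq f(J)\} \subseteq \{\exists n,\, S_n \geq f(n)\}$. Monotonicity of probability then immediately yields $\mathbb{P}(S_J \geq f(J)) \leq \mathbb{P}(\exists n,\, S_n \geq f(n)) \leq \delta$, which is (A). Since $J$ was arbitrary, this direction is complete with essentially no extra work.

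For the harder direction, assume (A) holds for every (finite) stopping time, and I would use a truncation argument to handle the possibility $\mathbb{P}(J^* = \infty) > 0$, which is the one genuine obstacle: $S_{J^*}$ and $f(J^*)$ are not defined on $\{J^* = \infty\}$, so I cannot plug $J^*$ directly into (A). Instead, for each finite $m$ set $J_m := J^* \wedge m$, a bounded (hence finite, valid) stopping time. On the event $\{J^* \leq m\}$ we have $J_m = J^*$ and thus $S_{J_m} = S_{J^*} \geq f(J^*) = f(J_m)$, giving the inclusion $\{J^* \leq m\} \subseteq \{S_{J_m} \geq f(J_m)\}$. Applying hypothesis (A) to the finite stopping time $J_m$ then gives $\mathbb{P}(J^* \leq m) \leq \mathbb{P}(S_{J_m} \geq f(J_m)) \leq \delta$ for every $m$.

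Finally, I would let $m \to \infty$. Since $\{J^* \leq m\} \uparrow \{J^* < \infty\}$, continuity of probability from below gives $\mathbb{P}(J^* < \infty) = \lim_{m\to\infty} \mathbb{P}(J^* \leq m) \leq \delta$. Combined with the identification $\{J^* < \infty\} = \{\exists n,\, S_n \geq f(n)\}$ from the first step, this establishes (B) and closes the equivalence. The only delicate point throughout is the treatment of non-crossing paths; the truncation $J_m$ sidesteps it cleanly, and nothing about the argument relies on $\{S_n\}$ being a random walk specifically—only that $f$ is a fixed (non-random) boundary and that stopping times are adapted—so the same reasoning transfers verbatim to the martingale sequences used in Lemmas~\ref{term1} and~\ref{term2}.
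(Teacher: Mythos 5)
Your proof is correct, but note that the paper itself offers no proof of this lemma: it is imported verbatim as Proposition~1 of \cite{zhao2016adaptive} and used as a black box (to transfer the Freedman/Bernstein bound of Lemma~\ref{freedman} to the stopping time $U_{m,j}$ in Lemma~\ref{term2}). What you have written is precisely the standard first-crossing argument underlying that cited proposition: the easy direction by the inclusion $\{S_J \geq f(J)\} \subseteq \{\exists n,\, S_n \geq f(n)\}$, and the converse via the first hitting time $J^* = \inf\{n : S_n \geq f(n)\}$, its bounded truncations $J^* \wedge m$, and continuity of probability from below. The truncation step is exactly the right way to handle $\mathbb{P}(J^* = \infty) > 0$; the only implicit convention worth stating explicitly is that for a possibly infinite stopping time $J$ the event $\{S_J \geq f(J)\}$ is read as $\{J < \infty,\ S_J \geq f(J)\}$, which your argument already respects since you only ever evaluate $S$ at bounded stopping times. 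Your closing observation is also apt and matters for the paper's usage: nothing in the argument uses the random-walk structure, only adaptedness and a deterministic boundary $f$, which is what licenses applying the lemma to the martingale $Y_s$ of Lemma~\ref{martingaleproof} rather than to a literal random walk. In short, your proposal supplies a self-contained proof where the paper merely cites one, and it is the proof one would expect.
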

Using the above lemma in the setting of Lemma~\ref{freedman}, we are guaranteed that $Y_J$ also follows the same concentration as given by Equation~\ref{eq:concentration}. The following lemma combines the intuition of Doob's Optional Stopping theorem with the Azuma-Hoeffding's Inequality and also bounds the growth rate of certain martingale sequences.

\begin{lemma}\label{doob}
(Lemma A.1 in~\cite{szita2011agnostic}, Lemma 11 in~\cite{pike2018bandits}) Fix the positive integers $m$, $n$ and let $a,c \in \mathbb{R}$. Let $\mathcal{F}= \{\mathcal{F}_t\}_{t=0}^{\infty}$ be a filtration, $(\rho_t)_{t=1,2,3...n}$ be $\{0,1\}$-valued and $\mathcal{F}_{t-1}$-measurable random variables, and $(Z_t)_{t=1,2,3...n}$ be $\mathcal{F}_t$-measurable $\mathbb{R}$-valued random variables satisfying $\mathbb{E}[Z_t\vert\mathcal{F}_{t-1}]=0$, $Z_t\in[a,a+c]$ and $\sum_{s=1}^n\rho_s \leq m$ with probability one. Then for any $\eta >0$:
\begin{equation*}
    \mathbb{P} \left( \sum_{t=1}^n \rho_t Z_t \geq \eta \right) \leq \exp\left(-\frac{2\eta^2}{c^2m} \right ). 
\end{equation*}
\end{lemma}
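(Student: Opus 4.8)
The plan is to prove the bound by the standard Chernoff/exponential-supermartingale route, viewing $S_k := \sum_{t=1}^k \rho_t Z_t$ as a martingale whose increments are ``switched on'' only on the at most $m$ rounds where $\rho_t=1$. First I would verify that $S_k$ is indeed an $\{\mathcal{F}_k\}$-martingale: since $\rho_t$ is $\mathcal{F}_{t-1}$-measurable and $\mathbb{E}[Z_t\mid\mathcal{F}_{t-1}]=0$, we get $\mathbb{E}[\rho_t Z_t\mid\mathcal{F}_{t-1}]=\rho_t\,\mathbb{E}[Z_t\mid\mathcal{F}_{t-1}]=0$. Each increment $\rho_t Z_t$ equals $0$ when $\rho_t=0$ and lies in $[a,a+c]$, an interval of width $c$, when $\rho_t=1$.

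The key object I would build is the exponential supermartingale $W_k := \exp\!\big(\lambda S_k-\tfrac{\lambda^2 c^2}{8}\sum_{t=1}^k\rho_t\big)$ for a free parameter $\lambda>0$, with $W_0=1$. To show $\mathbb{E}[W_k\mid\mathcal{F}_{k-1}]\le W_{k-1}$, I would condition on $\mathcal{F}_{k-1}$ (which fixes $\rho_k$) and apply Hoeffding's lemma: for a conditionally mean-zero variable supported in an interval of width $c$, $\mathbb{E}[e^{\lambda\rho_k Z_k}\mid\mathcal{F}_{k-1}]\le\exp(\lambda^2 c^2\rho_k/8)$, which holds trivially as the value $1$ when $\rho_k=0$. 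The predictable compensator $\exp(-\lambda^2 c^2\rho_k/8)$ exactly cancels this factor, giving the supermartingale property and hence $\mathbb{E}[W_n]\le W_0 = 1$.

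Next I would exploit the almost-sure budget constraint $\sum_{t=1}^n\rho_t\le m$, which is precisely where the denominator improves from $n$ to $m$. Since $\sum_{t=1}^n\rho_t\le m$, the pointwise bound $W_n\ge\exp(\lambda S_n-\lambda^2 c^2 m/8)$ holds, so together with $\mathbb{E}[W_n]\le 1$ we obtain $\mathbb{E}[\exp(\lambda S_n)]\le\exp(\lambda^2 c^2 m/8)$. A Chernoff bound then yields $\mathbb{P}(S_n\ge\eta)\le\exp(-\lambda\eta+\lambda^2 c^2 m/8)$, and optimizing over $\lambda$ by taking $\lambda=4\eta/(c^2 m)$ gives the claimed $\exp(-2\eta^2/(c^2 m))$.

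The one genuinely delicate point — and the main obstacle — is resisting the temptation to pull the $\rho_k$-dependent variance proxy out of the expectation before the end: because $\rho_k$ is random (though predictable), the compensator must be carried inside the exponent through the entire supermartingale recursion, and only after establishing $\mathbb{E}[W_n]\le 1$ may one invoke the almost-sure bound $\sum_t\rho_t\le m$. This ordering is what legitimately replaces $n$ by $m$ and is the crux of the Doob-stopping-flavored argument alluded to in the lemma's provenance; an equivalent route would be to time-change to the (at most $m$) active rounds and apply the ordinary Azuma--Hoeffding inequality there.
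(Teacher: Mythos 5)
Your proof is correct. One thing to be aware of: the paper never proves this lemma at all — it is imported by citation (Lemma A.1 of \cite{szita2011agnostic}, Lemma 11 of \cite{pike2018bandits}) and stated in the preliminaries without argument, so there is no in-paper proof to match; your proposal supplies a valid, self-contained one. Every step checks out: since $\rho_k$ is $\mathcal{F}_{k-1}$-measurable it can be pulled out of the conditional expectation, the conditional Hoeffding lemma gives $\mathbb{E}[e^{\lambda\rho_k Z_k}\mid\mathcal{F}_{k-1}]\le e^{\lambda^2 c^2\rho_k/8}$ (trivially when $\rho_k=0$), the predictable compensator in $W_k=\exp\bigl(\lambda S_k-\tfrac{\lambda^2c^2}{8}\sum_{t\le k}\rho_t\bigr)$ cancels it exactly, and invoking the almost-sure budget $\sum_{t=1}^n\rho_t\le m$ only \emph{after} establishing $\mathbb{E}[W_n]\le 1$ is precisely what legitimately replaces $n$ by $m$; the Chernoff optimization at $\lambda=4\eta/(c^2m)$ yields $\exp(-2\eta^2/(c^2m))$ as claimed. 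For comparison, the cited sources argue by the route you flag as an alternative: a Doob optional-skipping/time-change to the at most $m$ rounds with $\rho_t=1$, padding with zero increments to length exactly $m$, verifying that the selected increments remain a martingale difference sequence, and then applying plain Azuma--Hoeffding. The two arguments give identical constants; yours trades the bookkeeping of the skipping/padding construction for carrying the compensator through the supermartingale recursion, and it generalizes more readily (the same template with a second-moment compensator gives Freedman-type bounds, i.e., the paper's Lemma~\ref{freedman}), while the time-change version is somewhat more transparent as a reduction to a textbook inequality.
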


\subsection{ADDITIONAL PROOFS}\label{subsec:missing_proofs}
\begin{customlemma}{1}\label{stocnm_app}
There exists a positive $n_m$ for which the estimate $\overline{X}_{m,j}$ calculated by Algorithm~\ref{alg:expectation_known} for an active arm $j$ ( $j\in \mathcal{K}_m$ ) and phase $m$, satisfies the following inequality with probability at least $1-\frac{2}{T^2}$:
\begin{equation*}
    \overline{X}_{m,j} - \mu_j \leq \tilde{\Delta}_m/2.
\end{equation*}
\end{customlemma}
\begin{proof}
Using the above notation, it follows that for each arm $j$:
\begin{gather}
    \sum_{i=1}^m\sum_{t=S_{i,j}}^{U_{i,j}} ( X_t - \mu_j ) \leq \sum_{i=1}^m\sum_{t=S_{i,j}}^{U_{i,j}} ( R_{t,J_t} - \mu_j )\nonumber  \\- \sum_{i=1}^m\sum_{t=S_{i,j}}^{U_{i,j}} R_{t,J_t}\indicator\{ T_j(J_t,N) \leq D_{t,J_t}\}.\label{eq:stoc1_app}
\end{gather}
Since only one arm is played at a time, $T_t(J_t,N) \leq t-S_{i,j}$ and $J_t=j$ within a phase. Therefore:
\begin{gather*}
    \sum_{i=1}^m\sum_{t=S_{i,j}}^{U_{i,j}} ( X_t - \mu_j ) \leq \sum_{i=1}^m\sum_{t=S_{i,j}}^{U_{i,j}} ( R_{t,j} - \mu_j )\nonumber \\-  \sum_{i=1}^m\sum_{t=S_{i,j}}^{U_{i,j}} R_{t,j}\indicator\{ T_t(J_t,N) \leq D_{t,J_t}\}.
\end{gather*}
Define $A_{i,t} := R_{t,j}\indicator\{ t \leq S_{i,j} + D_{t,J_t}\}$ and $M_t := \sum_{i=0}^m A_{i,t}\indicator\{ S_{i,j}\leq t \leq U_{i,j} \}$ . We rewrite~(\ref{eq:stoc1}) in terms of $M_t$ as:
\begin{gather*}
    \sum_{i=1}^m\sum_{t=S_{i,j}}^{U_{i,j}} ( X_t - \mu_j ) \leq \sum_{i=1}^m\sum_{t=S_{i,j}}^{U_{i,j}} ( R_{t,j} - \mu_j ) - \sum_{t=1}^{U_{m,j}} M_t,
\end{gather*}
therefore:
\begin{gather}
    \sum_{i=1}^m\sum_{t=S_{i,j}}^{U_{i,j}} ( X_t - \mu_j ) \leq \sum_{i=1}^m\sum_{t=S_{i,j}}^{U_{i,j}} ( R_{t,j} - \mu_j )\nonumber\\ +\sum_{t=1}^{U_{m,j}} (\mathbb{E}[M_t\vert G_{t-1}] - N_t )  -
    \sum_{t=1}^{U_{m,j}}\mathbb{E}[M_t\vert G_{t-1}].\label{eq:71_app}
\end{gather}
Due to the above construction we are able to succinctly separate loss in rewards due to the \textit{wear-in} effect. We bound each term individually in Equation~\ref{eq:71_app}. The first term is nothing but the deviations of the rewards from their true means and hence a reasonable upper bound is crucial for the $T$ dependence of the leading term in Corollary~\ref{coroll:expectation_known}. On the other hand the upper bounds on the second term decides the impact of the priming effect. From Lemmas~\ref{term1} and~\ref{term2} accompanied by a trivial non-negative upper bound for the last term in Equation~\ref{eq:71_app} above, we can write that with probability at least $1-\frac{2}{T^2}$ (from union bound):
\begin{gather*}
    \sum_{i=1}^m\sum_{t=S_{i,j}}^{U_{i,j}} ( X_t - \mu_j ) \leq \sqrt{n_m \log(T)} + \frac{2}{3}\log(T) \\+ \sqrt{\frac{4\log^2(T)}{9}+4m\expd\log(T)}.
\end{gather*}
For each active arm $j \in \mathcal{K}_m$,
\begin{gather*}
    \frac{1}{n_m}\sum_{t \in T_j(m)} ( X_t - \mu_j ) \leq \sqrt{\frac{\log(T)}{n_m}} \\+ \frac{2\log(T)}{n_m} + \frac{1}{n_m}\sqrt{4m\expd\log(T)} = w_m.
\end{gather*}
Algorithm~\ref{alg:expectation_known} requires $w_m \leq \tilde{\Delta}_m/2$ so that the arm elimination condition holds good. This helps to determine the appropriate $n_m$. Let $s1=\\ \sqrt{\frac{4\log^2(T)}{9}+4m\expd\log(T)}$, then, the smallest $n_m$ which satisfies the above is given by:
\begin{gather*}
    n_m = \left\lceil\frac{1}{\tilde{\Delta}^2_m}\left( \sqrt{\log(T)}\qquad \qquad\qquad \qquad \qquad\qquad \right.\right. \\
    \quad\quad\left.\left.+ \sqrt{\log(T) + \frac{4\tilde{\Delta}_m\log(T)}{3} + 2\tilde{\Delta}_m s1 } \right )^2\right\rceil.
\end{gather*}
Using $(a+b)^2 \leq 2(a^2+b^2)$ and $x=\lceil y \rceil \Rightarrow x \leq y+1$:
\begin{gather*}
    n_m \leq  \left\lceil\frac{1}{\tilde{\Delta}^2_m}\left( 4 \log(T) + \frac{8\tilde{\Delta}_m\log(T)}{3} + 4\tilde{\Delta}_ms1\right )\right\rceil.
\end{gather*}
We can now substitute $s1$ and use inequality $\sqrt{a^2+b^2}\leq (a+b)$ to get:
\begin{gather*}
n_m \leq \left\lceil\frac{4 \log(T)}{\tilde{\Delta}^2_m} + \frac{24\log(T)}{3\tilde{\Delta}_m} + \frac{8\sqrt{m\expd\log(T)}}{\tilde{\Delta}_m} \right\rceil.
\end{gather*}
Further, we can modify the \textit{ceiling} operator with a tight upper bound as below:
\begin{gather*}
n_m \leq 1+\frac{4 \log(T)}{\tilde{\Delta}^2_m} + \frac{16\log(T)}{3\tilde{\Delta}_m} + \frac{8\sqrt{m\expd\log(T)}}{\tilde{\Delta}_m}.
\end{gather*}
This completes the proof.
\end{proof}

\begin{customlemma}{2}\label{martingaleproof_app}
$Y_s = \sum_{t=1}^s \mathbb{E}[M_t\vert G_{t-1}] - M_t)$ for all $s\geq 1$ with $Y_0=0$ is a martingale with respect to the filtration $\{G_s\}^{\infty}_{s=0}$ with increments $C_s = Y_s-Y_{s-1} = \mathbb{E}[M_s\vert G_{s-1}] - M_s )$, satisfying $\mathbb{E}[C_s\vert G_{s-1}] = 0$, $C_s \leq 1$ for all $s\geq 1$.
\end{customlemma}
\begin{proof}
To show $\{Y_s\}_{s=0}^{\infty}$ is a martingale defined on filtration $\{\mathcal{G}_s\}_{s=0}^{\infty}$, we need to show $Y_s$ is $\{\mathcal{G}_s\}$-measurable for all $s\geq 1$ and $\mathbb{E}[Y_s\vert G_{s-1}] =Y_{s-1}$.

By the definition of $\sigma$-algebra  $\{\mathcal{G}_s\}_{s=0}^{\infty}$, random variables $D_{t,J_t},R_{t,J_t}$ are all $\{\mathcal{G}_s\}$-measurable for $t\leq s$. Additionally for phases $i$ where time instance $t$ lie in phases after $i$, $\indicator\{ S_{i,j}\leq t \leq U_{i,j} \}=0$ ( measurable by $\mathcal{G}_0$ ). Hence $\{Y_s\}_{s=0}^{\infty}$ is measurable by $\{\mathcal{G}_s\}_{s=0}^{\infty}$. Now consider the conditional expectation:
\begin{gather*}
    \mathbb{E}[Y_s\vert \mathcal{G}_{s-1}] = \mathbb{E}\left[\sum_{t=1}^s (\mathbb{E}[M_t\vert G_{t-1}] - M_t) \vert \mathcal{G}_{s-1} \right ]\\
    = \mathbb{E}\left[\sum_{t=1}^{s-1} (\mathbb{E}[M_t\vert G_{t-1}] - M_t) \vert \mathcal{G}_{s-1} \right ] +\\ \mathbb{E}\left[ (\mathbb{E}[M_s\vert G_{s-1}] - M_s) \vert \mathcal{G}_{s-1} \right ]\\
    = \mathbb{E}\left[\sum_{t=1}^{s-1} (\mathbb{E}[M_t\vert G_{t-1}] - M_t) \vert \mathcal{G}_{s-1} \right ]  = Y_{s-1}.
\end{gather*}
Therefore $\{Y_s\}_{s=0}^{\infty}$ is a martingale with respect to the filtration $\{\mathcal{G}_s\}_{s=0}^{\infty}$. Clearly, the increment $C_s = Y_s - Y_{s-1} = (\mathbb{E}[M_s\vert G_{s-1}] - M_s)$ and $\mathbb{E}[C_s\vert G_{s-1}] = \mathbb{E}[(\mathbb{E}[M_s\vert G_{s-1}] - M_s)\vert G_{s-1} ] = 0$ 

Note that for any phase $i$, $A_{i,t} \leq 1$ as reward $R_{t,J_t}$ is bounded by $1$. Also, for any time $t$ $M_t \leq 1$ as  $\indicator\{ S_{i,j}\leq t \leq U_{i,j} \}$ is $1$ for only a particular phase $i$ thus $C_s \leq 1$ for $s \geq 1$. This completes the proof.
\end{proof}

\begin{customlemma}{3}\label{term1_app} With probability at least $1 - \frac{1}{T^2}$,
\begin{gather*}
     \sum_{i=1}^m\sum_{t=S_{i,j}}^{U_{m,j}}( R_{t,J_t} -\mu_j ) \leq \sqrt{n_m \log(T)}.
\end{gather*}
\end{customlemma}
\begin{proof}
We will invoke an instance of Lemma~\ref{doob} to prove the above. For arm $j$, take $n=T$, $\mathcal{F}_t$ as filtration with $\sigma$-algebra on $(X_1,....X_t,R_{1,j}...R_{t,j})_{t=1,2...T}$. Let $Z_t = R_{t,j}-\mu_j$ and $\rho_t = \indicator\{J_t=j,\;t\leq U_{m,j}\}$. Therefore $\sum_{t=1}^T\rho_t$ is nothing but number of times arm $j$ was pulled till phase $m$, which is equal to $\vert T_j(m) \vert$ by definition. Also, $\vert T_j(m) \vert \leq n_m$. Hence the summation can be alternatively written as: $\sum_{t\in T_j(m)} (R_{t,j} -\mu_j) = \sum_{t=1}^T \rho_t(R_{t,j}-\mu_j)$.

Additionally, for any $1\leq t \leq T$, $\rho_t = \indicator\{J_t=j,\;t\leq U_{m,j}\}$ is $\mathcal{F}_{t-1}$-measurable. Given all the observations $X_1,X_2....X_{t-1}$ till $(t-1)$, we know in which phase does $t$ belongs. This is because of the phased nature of Algorithm~\ref{alg:expectation_known}, thus, $\indicator\{t\leq U_{m,j}\}$ is determined. Similarly $J_t=j$ can also be determined, establishing that $\rho_t$ is $\mathcal{F}_{t-1}$-measurable. $Z_t$ is $\mathcal{F}_{t}$-measurable by definition. Taking $a=-\mu_j$ and $c=1$ in the application of Lemma~\ref{doob}, we get the stated result.
\end{proof}

\begin{customlemma}{4}\label{variation_app}
For any $t$, if $P_t= \mathbb{E}[M_t\vert G_{t-1}] - M_t$, then
\begin{gather*}
    \sum_{t=1}^{U_{m,j}} \mathbb{E}[P_t^2\vert G_{t-1}] \leq m\expd
\end{gather*}
\end{customlemma}
\begin{proof}
Consider:
\begin{align*}
\sum_{t=1}^{U_{m,j}} \mathbb{E}[P_t^2\vert G_{t-1}] = \sum_{t=1}^{U_{m,j}} \mathbb{V}[M_t^2\vert G_{t-1}] \leq \sum_{t=1}^{U_{m,j}} \mathbb{E}[M_t^2\vert G_{t-1}]\\
 = \sum_{t=1}^{U_{m,j}}\mathbb{E}\left[\left( \sum_{i=1}^m A_{i,t}\indicator \{ S_{i,j} \leq t \leq U_{i,j} \} \right)^2 \vert G_{t-1} \right].
\end{align*}
Notice that the product, $\indicator \{ S_{i,j} \leq t \leq U_{i,j} \} \indicator \{ S_{k,j} \leq t \leq U_{k,j} \} =0 $ for distinct phases $i,k \leq m$ as $t$ lies only in a specific phase. Therefore:
\begin{gather*}
\sum_{t=1}^{U_{m,j}} \mathbb{E}[P_t^2\vert G_{t-1}] \leq \sum_{t=1}^{U_{m,j}}\mathbb{E}\left[\sum_{i=1}^m A_{i,t}^2\indicator \{ S_{i,j} \leq t \leq U_{i,j} \} \vert G_{t-1} \right ]\\ 
= \sum_{i=1}^m\sum_{t=1}^{U_{m,j}}\mathbb{E}\left[\sum_{i=1}^m A_{i,t}^2\indicator \{ S_{i,j} \leq t \leq U_{i,j} \} \vert G_{t-1} \right ]\\    
= \sum_{i=1}^m\sum_{t=S_{i,j}}^{U_{m,j}}\mathbb{E}\left[ A_{i,t}^2\indicator \{ S_{i,j} \leq t \leq U_{i,j} \} \vert G_{t-1} \right ].    
\end{gather*}
As $S_{i,j}$ and $U_{i,j}$ are $G_{t-1}$-measurable and if $\indicator \{ S_{i,j} \leq t \leq U_{i,j} \}=1$, therefore:
\begin{gather*}
\leq \sum_{i=1}^m\sum_{t=S_{i,j}}^{U_{i,j}}\mathbb{E}[A_{i,t}^2\vert G_{t-1}]\\     
= \sum_{i=1}^m\sum_{t=S_{i,j}}^{U_{i,j}}\mathbb{E}[R_{t,J_t}^2\indicator\{ t \leq S_{i,j} + D_{t,J_t} \} \vert G_{t-1}]\\    
\leq \sum_{i=1}^m\sum_{t=S_{i,j}}^{U_{i,j}}\mathbb{E}[\indicator\{ t \leq S_{i,j} + D_{t,J_t} \} \vert G_{t-1}]\\
= \sum_{i=1}^m\sum_{s=0}^{\infty}\sum_{s'=s}^{\infty}\sum_{t=s}^{s'}\mathbb{E}[\indicator\{S_{i,j}=s,\;U_{i,j}=s',\; t \leq s + D_{t,J_t} \} \vert G_{t-1}]\\
= \sum_{i=1}^m\sum_{s=0}^{\infty}\sum_{s'=s}^{\infty}\sum_{t=s}^{s'}\indicator\{S_{i,j}=s,\;U_{i,j}=s'\}\sum_{t=s}^{s'}\mathbb{P}( t \leq s + D_{t,J_t} )\\   
\leq \sum_{i=1}^m\sum_{s=0}^{\infty}\sum_{s'=s}^{\infty}\sum_{t=s}^{s'}\indicator\{S_{i,j}=s,\;U_{i,j}=s'\}\sum_{l=0}^{\infty}\mathbb{P}( l \leq  D ),  
\end{gather*}
\begin{equation*}
\leq \sum_{i=1}^m \expd = m\expd.    
\end{equation*}
This completes the proof.
\end{proof}

\begin{customlemma}{5}\label{term2_app}
With probability at least $1-\frac{1}{T^2}$,
\begin{gather*}
    \sum_{t=1}^{U_{m,j}}(\mathbb{E}[M_t\vert G_{t-1}] - M_t ) < \frac{2}{3}\log(T)\\ + \sqrt{\frac{4\log^2(T)}{9}+4m\expd\log(T)}.
\end{gather*}
\end{customlemma}
\begin{proof}
$Y_s = \sum_{t=1}^s (\mathbb{E}[M_t\vert G_{t-1}] - M_t)$ for all $s\geq 1$ and $Y_0=0$ is a martingale with respect to the filtration $\{G_s\}^{\infty}_{s=0}$. Also, the increments $Z_s = Y_s-Y_{s-1} = \mathbb{E}[M_s\vert G_{s-1}] - M_s $ satisfy $\mathbb{E}[Z_s\vert G_{s-1}] = 0$ and $Z_s \leq 1$ for all $s\geq 1$. Additionally the Lemma~\ref{variation} implies $\sum_{t=1}^s\mathbb{E}[Z_t^2\vert G_{t-1}] \leq m\expd$. From Lemma~\ref{freedman}, there exists a $s$ for which $\sum_{t=1}^s (\mathbb{E}[M_t\vert G_{t-1}] - M_t)$ is bounded with high probability. Now, Lemma~\ref{equivalence} suggests that $Y_J$ concentrates well for all stopping times $J$ and hence, also for $J=U_{m,j}$.
\end{proof}

\noindent{\textbf{Proof of Theorem~\ref{thm:expectation_known}:}}
\begin{proof}
We create four mutually exclusive and exhaustive cases. We then bound the expected regret conditioned on the events of these cases. For each sub-optimal arm $i$, let $m_i := \min \{ m \vert \tilde{\Delta}_m < \frac{\Delta_i}{2}\}$, is the first phase where $\tilde{\Delta}_m < \frac{\Delta_i}{2}$. We also define $\mathcal{K}_1 = \{ i\in \mathcal{K} \vert \Delta_i > \lambda \}$. The four cases are as follows:\\

\textbf{Case (a):} \textit{Arm $i$ is not deleted in phase $m_i$ with the optimal arm $*$ in the set $\mathcal{K}_{m_i}$}.\\
The phase $m_i$ is characterized by : $w_{m_i} \leq \frac{\Delta_{m_i}}{2} \leq \frac{\Delta_i}{4}$. Let $E:=\indicator\{\overline{\mu}_i \leq \mu_i + w_{m_i}\}$ and $R:=\indicator\{\overline{\mu}_* \geq \mu_* - w_{m_i}\}$.

If the events $E$ and $R$ hold then the phase-end elimination condition of the Algorithm~\ref{alg:expectation_known} is satisfied, as:
\begin{equation*}
\overline{\mu}_i + w_{m_i} \leq \mu_i + 2w_{m_i} < \mu_i + \Delta_i - 2w_{m_i} \leq \overline{\mu}_*  - w_{m_i}.
\end{equation*}
From Lemma~\ref{stocnm}, $\mathbb{P}(E) > 1-\frac{1}{T^2}$ and $\mathbb{P}(R) > 1-\frac{1}{T^2}$ follows. In this case we are interested in regret conditional on $\{E^{\complement} \cup \R^{\complement}\}$, which via an union bound argument can be shown as:
\begin{equation*}
    R_T \leq  \sum_{i\in \mathcal{K}_1} \frac{4}{T^2} T\Delta_i \leq \sum_{i\in \mathcal{K}_1} \frac{4\Delta_i}{T}.
\end{equation*}

\textbf{Case (b)}\textit{ Arm $i$ is eliminated at the phase $m_i$ with the optimal arm $* \in \mathcal{K}_{m_i}$.}\\ 
By lemma~\ref{stocnm}, in case the sub-optimal arm $i$ is eliminated in the phase $m_i$ then the maximum number of times it is played is given by Equation~\ref{eq:stocnmval}. Additionally, we make use of $\Delta_i/4 \leq \tilde{\Delta}_m\leq \Delta_i/2$ and $m_i \leq \log_2\left(\frac{4}{\Delta_i}\right) < 2\log\left(\frac{4}{\Delta_i}\right)$. Therefore:
\begin{gather*}
n_{m_i} \leq   1 + \frac{64 \log(T)}{\Delta_i^2} + \frac{64\log(T)}{3\Delta_i} + \frac{32\sqrt{\log\left(\frac{4}{\Delta_i}\right)\expd\log(T)}}{\tilde{\Delta}_i}.
\end{gather*}
Thus,
\begin{gather*}
R_T  \leq  \sum_{i\in\mathcal{K}'}\Delta_i\left( 1 + \frac{64 \log(T)}{\Delta_i^2} + \frac{64\log(T)}{3\Delta_i}\right. \\ \left.+\frac{32\sqrt{\log\left(\frac{4}{\Delta_i}\right)\expd\log(T)}}{\tilde{\Delta}_i}\right)\\
\leq  \sum_{i\in\mathcal{K}'}\left( \Delta_i + \frac{64 \log(T)}{\Delta_i} + \frac{64\log(T)}{3} \right.\\
\left.+ 32\sqrt{\log\left(\frac{4}{\Delta_i}\right)\expd\log(T)} \right).
\end{gather*}

\textbf{Case (c)} \textit{Optimal arm $*$ deleted by some sub-optimal $i$ in the set $\mathcal{K}_2$}.\\
Now, we consider the case when the last of all the optimal arms (in case there more than one), denoted $*$, is eliminated by some sub optimal arm $i$ in $\mathcal{K}_2 = \{ i\in \mathcal{K} \vert \Delta_i >0 \}$ in some round $m_*$ (overloading the definition of $m_i$, $m_*$ is any round where $*$ is eliminated). As elimination of the optimal arm can be induced by larger number of arms (the set $\mathcal{K}''$) at the end of a phase as compared to during a phase, we only need to analyze the events at the end of a phase to upper bound regret.  This is similar in spirit to the events in \textbf{Case (d)}, as $E$ and $R$ cannot hold together with the elimination condition of the Algorithm~\ref{alg:expectation_known}. Hence the probability of this happening is again upper bounded by $\frac{4}{T^2}$ by a similar argument.

The optimal arm $*$ belonged to $K_{m_s}$ corresponding to all sub-optimal arms $s$ with $m_s < m_*$. Therefore arm $i$, which causes elimination of the optimal arm $*$, should satisfy $m_i \geq m_*$. Therefore the regret is upper bounded by:
\begin{gather*}
    R_T \leq \overset{\max_{j\in\mathcal{K}_1}m_j}{\underset{m_*=0}{\sum}} \underset{i\in \mathcal{K}_2:m_i \leq m_*}{\sum} \frac{4}{T^2}.T \underset{j\in \mathcal{K}_2:m_j\geq m_*}{\max}\Delta_j,
\end{gather*}
\begin{equation*}
    \leq \overset{\max_{j\in\mathcal{K}_1}m_j}{\underset{m_*=0}{\sum}} \underset{i\in \mathcal{K}_2:m_i \leq m_*}{\sum} \frac{4}{T}4\tilde{\Delta}_{m_*},
\end{equation*}
\begin{equation*}
    \leq \underset{i\in \mathcal{K}_2}{\sum} \underset{m_*\geq 0}{\sum} \frac{16}{T} 2^{-m_*} \leq \underset{i\in \mathcal{K}_2}{\sum}  \frac{32}{T}.
\end{equation*}

\textbf{Case (d)} \textit{Arm $i\,\in\,\mathcal{K}_2$ and $\notin \,\mathcal{K}_1$}.\\ 
Here, we account for the difference in the sets $\mathcal{K}_2$ and $\mathcal{K}_1$. The following gives an upper bound on the regret conditioned on this case:
\begin{equation*}
  R_T  \leq \underset{i \in \mathcal{K}_2: \Delta_i < \lambda}{\max} \Delta_i T .
\end{equation*}
As all the four cases are mutually exclusive and exhaustive, we thus, get the desired regret upper bound.
\end{proof}

\begin{customlemma}{6}\label{stocnm_wi_wo_app}
There exists a positive $n_m$ for which the estimate $\overline{X}_{m,(i,j)}$ calculated by Algorithm~\ref{alg:expectation_known2} for an active pair $(i,j)$ ( $(i,j)\,\in \mathcal{K}^2_m$ ) and phase $m$, satisfies the following inequality with probability at least $1-\frac{2}{T^2}$ :
\begin{gather*}
    \overline{X}_{m,(i,j)} - \mu_(i,j) \leq \tilde{\Delta}_m/2,
\end{gather*}
where $\mu_{i,j} = \frac{1}{2}(\mu_i+\mu_j)$.
\end{customlemma}
\begin{proof}
The phases are defined with respect to the compound arms and the reward $R_{t,J_t}$ is due to arm played by Algorithm~\ref{alg:expectation_known2} at time $t$. Define a filtration $\{\mathcal{G}_t\}_{t=0}^{\infty}$, with $\mathcal{G}_t$ a $\sigma$-algebra over $(X_{1}....X_t$,$J_1....J_t$,$D_{1,J_1}....D_{t,J_t}$,$Z_{1,J_1}....Z_{t,J_t}$, $R_{1,J_1}...R_{t,J_t})$. Most notations introduced for the proof of Lemma~\ref{stocnm_app} carry through unaltered here. For each compound arm $(i,j)$:
\begin{gather}
\sum_{k=1}^m\sum_{t=S_{k,(i,j)}}^{U_{k,(i,j)}} ( X_t - \mu_{(i,j)} ) \leq \sum_{k=1}^m\sum_{t=S_{k,(i,j)}}^{U_{k,(i,j)}} ( R_{t,J_t} - \mu_{(i,j)} )\nonumber \\ -\sum_{k=1}^m\sum_{t=S_{k,(i,j)}}^{U_{k,(i,j)}} R_{t,J_t}\indicator\{ \min(t-S_{k,(i,j)},T_t(J_t,N)) \leq D_{t,J_t}\}\nonumber \\
-\sum_{k=1}^m\sum_{t=S_{k,(i,j)}}^{U_{k,(i,j)}} R_{t,J_t}\indicator\{ \min(t-S_{k,(i,j)},T_t(J_t,N)) \geq Z_{t,J_t}\}\nonumber \\
\leq \sum_{k=1}^m\sum_{t=S_{k,(i,j)}}^{U_{k,(i,j)}} ( R_{t,J_t} - \mu_{(i,j)} )  -\sum_{k=1}^m\sum_{t=S_{k,(i,j)}}^{U_{k,(i,j)}} R_{t,J_t}\indicator\{ T_t(J_t,N) \leq D_{t,J_t}\}\nonumber \\
-\sum_{k=1}^m\sum_{t=S_{k,(i,j)}}^{U_{k,(i,j)}} R_{t,J_t}\indicator\{ \min(t-S_{k,(i,j)},T_t(J_t,N)) \geq Z_{t,J_t}\}\label{eq:stoc_wiwo_app},
\end{gather}
where the second term is due to the loss in rewards due to wear-in and third term accounts for the same due to wear-out. Using a non-negative upper bound for the third term, we can loosen the upper bound and write:
\begin{gather}
    \sum_{k=1}^m\sum_{t=S_{k,(i,j)}}^{U_{k,(i,j)}} ( X_t - \mu_{(i,j)} )
    \leq \sum_{k=1}^m\sum_{t=S_{k,(i,j)}}^{U_{k,(i,j)}} ( R_{t,J_t} - \mu_{(i,j)} )\nonumber \\ -\sum_{k=1}^m\sum_{t=S_{k,(i,j)}}^{U_{k,(i,j)}} R_{t,J_t}\indicator\{ T_t(J_t,N) \leq D_{t,J_t}\}\label{eq:stoc1_wiwo_app}.
\end{gather}
By ignoring the wear-out term, we loosen the upper bound however it is intuitive to see that the loss of rewards due to wear-out effect is limited. During a phase, since the algorithm plays 2 arms uniformly, it is likely that each of the two arms get played equal number of times and neither of them gets worn out in any contagious $N$ rounds. Similarly in any phase, number of such $N$ sized contagious periods where the arms play unequal number of times would be limited. Define $B_{k,t} := R_{t,J_t}\indicator\{ T_t(J_t,N) \leq D_{t,J_t}\}$ and $N_t := \sum_{k=0}^m B_{k,t}\indicator\{ S_{k,(i,j)}\leq t \leq U_{k,(i,j)} \}$. We rewrite Equation~\ref{eq:stoc1_wiwo_app} in terms of $N_t$ as:
\begin{gather*}
    \sum_{k=1}^m\sum_{t=S_{k,(i,j)}}^{U_{k,(i,j)}} ( X_t - \mu_{(i,j)} ) \leq \sum_{k=1}^m\sum_{t=S_{k,(i,j)}}^{U_{k,(i,j)}} ( R_{t,J_t} - \mu_{(i,j)} ) - \sum_{t=1}^{U_{m,j}} N_t,
\end{gather*}
therefore:
\begin{gather}
    \sum_{i=1}^m  \sum_{t=S_{i,j}}^{U_{i,j}} ( X_t - \mu_{(i,j)} ) \leq \sum_{i=1}^m\sum_{t=S_{i,j}}^{U_{i,j}} ( R_{t,J_t} - \mu_{(i,j)} )\nonumber\\ +\sum_{t=1}^{U_{m,j}} (\mathbb{E}[N_t\vert G_{t-1}] - N_t )  -
    \sum_{t=1}^{U_{m,j}}\mathbb{E}[N_t\vert G_{t-1}].\label{eq:71_wiwo_app}
\end{gather}
We bound each term individually in Equation~\ref{eq:71_wiwo_app}. The first term is nothing but the deviations of the rewards from their true means and hence a reasonable upper bound is crucial for the $T$ dependence of the leading term in Corollary~\ref{coroll:wi_wo}. On the other hand the upper bounds on the second term decides the impact of the priming effect. We again use Lemma~\ref{term1} to bound the growth of the first term in Equation~\ref{eq:71_wiwo_app}. This also works out since in the definition of compound arm $(i,j)$ the arms $i,j$ are played randomly with equal probability. Hence $R_{t,J_t} -\mu_{(i,j)}$ is still zero mean. Further we use Lemma~\ref{term2_wi_wo} accompanied by a trivial non-negative upper bound for the last two terms. Finally, we can write that with probability at least $1-\frac{2}{T^2}$ (from union bound):
\begin{flalign*}
    \sum_{k=1}^m\sum_{t=S_{k,(i,j)}}^{U_{k,(i,j)}} ( X_t - \mu_{(i,j)} ) \leq \sqrt{n_m \log(T)} + \frac{2}{3}\log(T) \\+ \sqrt{\frac{4\log^2(T)}{9}+4mN\expd\log(T)}.
\end{flalign*}
For each active arm $(i,j) \in \mathcal{K}^2_m$,
\begin{flalign*}
    \frac{1}{n_m}\sum_{t \in T_{(i,j)}(m)} ( X_t - \mu_{(i,j)} ) \leq \sqrt{\frac{\log(T)}{n_m}} \\+ \frac{2\log(T)}{n_m} + \frac{1}{n_m}\sqrt{4Nm\expd\log(T)} = w_m.
\end{flalign*}
Algorithm~\ref{alg:expectation_known} requires $w_m \leq \tilde{\Delta}_m/2$ so that the arm elimination condition holds good. This helps to determine the appropriate $n_m$. Let $s1=\\ \sqrt{\frac{4\log^2(T)}{9}+4Nm\expd\log(T)}$, then, the smallest $n_m$ which satisfies the above is given by:
\begin{flalign*}
    n_m = \left\lceil\frac{1}{\tilde{\Delta}^2_m}\left( \sqrt{\log(T)}\qquad \qquad\qquad \qquad \qquad\qquad \right.\right. \\
    \quad\quad\left.\left.+ \sqrt{\log(T) + \frac{4\tilde{\Delta}_m\log(T)}{3} + 2\tilde{\Delta}_m s1 } \right )^2\right\rceil.
\end{flalign*}
Using $(a+b)^2 \leq 2(a^2+b^2)$ and $x=\lceil y \rceil \Rightarrow x \leq y+1$:
\begin{equation*}
    n_m \leq  \left\lceil\frac{1}{\tilde{\Delta}^2_m}\left( 4 \log(T) + \frac{8\tilde{\Delta}_m\log(T)}{3} + 4\tilde{\Delta}_ms1\right )\right\rceil.
\end{equation*}
We can now substitute $s1$ and use inequality $\sqrt{a^2+b^2}\leq (a+b)$ to get:
\begin{equation*}
n_m \leq \left\lceil\frac{4 \log(T)}{\tilde{\Delta}^2_m} + \frac{24\log(T)}{3\tilde{\Delta}_m} + \frac{8\sqrt{Nm\expd\log(T)}}{\tilde{\Delta}_m} \right\rceil.
\end{equation*}
Further, we can modify the \textit{ceiling} operator with a tight upper bound as below:
\begin{equation*}
n_m \leq 1+\frac{4 \log(T)}{\tilde{\Delta}^2_m} + \frac{16\log(T)}{3\tilde{\Delta}_m} + \frac{8\sqrt{Nm\expd\log(T)}}{\tilde{\Delta}_m}.
\end{equation*}
This completes the proof.
\end{proof}
\begin{customlemma}{7}\label{martingaleproof_wi_wo}
$H_s := \sum_{t=1}^s \mathbb{E}[N_t\vert G_{t-1}] - N_t)$ for all $s\geq 1$ with $H_0=0$ is a martingale with respect to the filtration $\{G_s\}^{\infty}_{s=0}$ with increments $V_s = H_s-H_{s-1} = \mathbb{E}[N_s\vert G_{s-1}] - N_s $ satisfying $\mathbb{E}[V_s\vert G_{s-1}] = 0$ and  $V_s \leq 1$ for all $s\geq 1$.
\end{customlemma}
The proof is identical to that of Lemma~\ref{martingaleproof_app}.\\
\begin{customlemma}{8}\label{term2_wi_wo}
With probability at least $1-\frac{1}{T^2}$,
\begin{gather*}
   \sum_{t=1}^{U_{m,j}}(\mathbb{E}[N_t\vert G_{t-1}] - N_t ) < \frac{2}{3}\log(T)\\ + \sqrt{\frac{4\log^2(T)}{9}+4mN\expd\log(T)}.
\end{gather*}
\end{customlemma}
The proof is identical to that of Lemma~\ref{term2} when we use the appropriate bounds on the variation process as given by Lemma~\ref{lemma:variation_wiwo} in this section.\\

\begin{customlemma}{9}\label{lemma:variation_wiwo}
For any $t$, if $P_t= \mathbb{E}[N_t\vert G_{t-1}] - N_t$ then
\begin{gather*}
    \sum_{t=1}^{U_{m,(i,j)}} \mathbb{E}[P_t^2\vert G_{t-1}] \leq Nm\expd 
    \end{gather*}
\end{customlemma}
\begin{proof}
Consider:
\begin{gather*}
\sum_{t=1}^{U_{m,(i,j)}} \mathbb{E}[P_t^2\vert G_{t-1}] = \sum_{t=1}^{U_{m,(i,j)}} \mathbb{V}[N_t^2\vert G_{t-1}] \leq \sum_{t=1}^{U_{m,(i,j)}} \mathbb{E}[N_t^2\vert G_{t-1}]\\
 = \sum_{t=1}^{U_{m,(i,j)}}\mathbb{E}\left[\left( \sum_{k=1}^m B_{k,t}\indicator \{ S_{k,(i,j)} \leq t \leq U_{k,(i,j)} \} \right)^2 \vert G_{t-1} \right].
\end{gather*}
Notice that the product, $\indicator \{ S_{k,(i,j)} \leq t \leq U_{k,(i,j)} \} \indicator \{ S_{l,(i,j)} \leq t \leq U_{l,(i,j)} \} =0 $ for distinct phases $k,l \leq m$ as $t$ lies only in a specific phase. Therefore:
\begin{gather*}
\sum_{t=1}^{U_{m,(i,j)}} \mathbb{E}[P_t^2\vert G_{t-1}] \leq \sum_{t=1}^{U_{m,(i,j)}}\mathbb{E}\left[\sum_{k=1}^m B_{k,t}^2\indicator \{ S_{k,(i,j)} \leq t \leq U_{k,(i,j)} \} \vert G_{t-1} \right ], 
\end{gather*}
\begin{equation*}
= \sum_{t=1}^{U_{m,(i,j)}}\mathbb{E}\left[\sum_{k=1}^m B_{k,t}^2\indicator \{ S_{k,(i,j)} \leq t \leq U_{k,(i,j)} \} \vert G_{t-1} \right ],     
\end{equation*}
\begin{equation*}
= \sum_{k=1}^m\sum_{t=S_{k,(i,j)}}^{U_{m,(i,j)}}\mathbb{E}\left[ B_{k,t}^2\indicator \{ S_{k,(i,j)} \leq t \leq U_{k,(i,j)} \} \vert G_{t-1} \right ].    
\end{equation*}
As $S_{k,(i,j)}$ and $U_{k,(i,j)}$ are $G_{t-1}$-measurable and if $\indicator \{ S_{k,(i,j)} \leq t \leq U_{k,(i,j)} \}=1$, therefore:
\begin{gather*}
\leq \sum_{k=1}^m\sum_{t=S_{k,(i,j)}}^{U_{k,(i,j)}}\mathbb{E}[B_{k,t}^2\vert G_{t-1}]\\     
= \sum_{k=1}^m\sum_{t=S_{k,(i,j)}}^{U_{k,(i,j)}}\mathbb{E}[R_{t,J_t}^2\indicator\{ T_t(J_t,N) \leq D_{t,J_t} \} \vert G_{t-1}]\\    
\leq \sum_{k=1}^m\sum_{t=S_{k,(i,j)}}^{U_{k,(i,j)}}\mathbb{E}[\indicator\{ T_t(J_t,N) \leq D_{t,J_t} \} \vert G_{t-1}]\\ 
= \sum_{k=1}^m\sum_{s=0}^{\infty}\sum_{s'=s}^{\infty}\sum_{t=s}^{s'}\mathbb{E}[\indicator\{S_{k,(i,j)}=s,\;U_{k,(i,j)}=s',\; T_t(J_t,N) \leq D_{t,J_t} \} \vert G_{t-1}]\\ 
= \sum_{k=1}^m\sum_{s=0}^{\infty}\sum_{s'=s}^{\infty}\sum_{t=s}^{s'}\indicator\{S_{k,(i,j)}=s,\;U_{k,(i,j)}=s'\}\sum_{t=s}^{s'}\mathbb{P}( T_t(J_t,N) \leq D_{t,J_t} ).  
\end{gather*}
$T_t(J_t,N)$ is a random variable which takes upto $N$. Hence an union bound gives:

\begin{equation*}
\leq \sum_{i=1}^m\sum_{s=0}^{\infty}\sum_{s'=s}^{\infty}\sum_{t=s}^{s'}\indicator\{S_{i,j}=s,\;U_{i,j}=s'\}N\sum_{l=0}^{\infty}\mathbb{P}( l \leq  D ),  
\end{equation*}
\begin{equation*}
\leq \sum_{i=1}^m \expd = Nm\expd.    
\end{equation*}
This completes the proof.
\end{proof}

\textbf{Proof of Theorem~\ref{thm:wi_wo}:}\\
Proceeding as in the proof of Theorem~\ref{thm:expectation_known}, we create four mutually exclusive and exhaustive cases with the difference from the former proof being: in this setting the arm set is $\arms^2$ (set of all possible pairs of arms) which is of size $|^{\arms}C_2|$. To be able to reuse the proof for the Theorem~\ref{thm:expectation_known}, we rely on the idea of compound arms. We must replace notion of $\Delta_i$ corresponding to the arm $i$ by the analogous notions of the regret gap, $\Delta_{(i,j)}$ for the arms $(i,j)$. Similarly the sets, $\arms_1$ and $\arms_2$ would need to be appropriately redefined and would contain $\mathrm{O}(K^2)$ elements. Specifically, if $\arms^2$ be the set of all compound arms (all pair-wise combinations from $\arms$) then: $\mathcal{K}^2_1 = \{ (i,j)\in \mathcal{K}^2 \vert \Delta_{(i,j)} > \lambda \}$, $\mathcal{K}^2_2 = \{ (i,j)\in \mathcal{K}^2 \vert \Delta_{(i,j)} >0 \}$, and $\Delta_{(i,j)} = \frac{1}{2}(\mu_{(1)}^* + \mu_{(2)}^* - \mu_{i} - \mu_{j})$.

\subsection{ASSUMPTIONS ON THE SUPPORT OF THE DISTRIBUTIONS: $\xi^D$ and $\xi^Z$}\label{subsec:alpha_beta_assumption}
Recall from Section~\ref{sec:problem_definition} that the wear-in and the wear-out effects manifest through distributions $\xi^D$ and $\xi^Z$ supported on $\{0,...,a\}$ and $\{b,...,N\}$ respectively. Following assumptions intuitively follow:
\begin{itemize}
    \item $a < N$: This implies that the actions utilize the whole of relevant history duration of $N$ just to get worn-in. Otherwise, there would be problem instances when no reward is accrued at all.
    \item $b < N$: We do not allow problem instances when wear-out is weak/non-existent. As $b \geq N$ implies that there would be problem instances in which, even if the arm was played through all the $N$ past rounds, the arm does not get worn out.
    \item $a < b$ : Thus, we disallow instances where the arms get worn-out before it could be worn-in.
\end{itemize}

To analyse combined impact of the wear-in and the wear-out effects, in Section~\ref{sec:wiwo}, we made an additional assumption that $a < \nicefrac{N}{2}$. This might seem to be a strong assumption but it is practical. To see why this assumption is required consider a contradictory case: let $a > \nicefrac{N}{2}$, then we there can be instance of the distribution $\xi_j^D$ ( say $D_{t,j}$ are constant equal to $\nicefrac{N}{2}+1$) for which only meaningful arm playing strategy is to play a single for all instance in a window $N$. This is because the number of rounds required to wear-in is more than half of the history window size and hence in a window size of $N$ only one arm could be worn-in. As $b <N$ ( that is wear-out is significant in the observation window) then arm has to switch for the next window of size $N$. Thus, any algorithm in this regime must play an arm continuously for $N$ rounds wherein only $N-D-(N-Z)$ rounds accrue rewards and the algorithm must switch arms after every $N$ instances.

\subsection{OPTIMAL BENCHMARKS}\label{subsec:optimal benchmark}

\begin{lemma}\label{lemma:wi_benchmark}
Under only wear-in effect (i.e., $Z_{t,j} > N\,\forall t$ and arms $j$), the optimal constant benchmark policy ($\pi$) in the Eq (\ref{eq:regret_general}) is the one which plays the arm with the highest reward for all rounds $T$.
\end{lemma}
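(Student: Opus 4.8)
The plan is to restrict attention to constant policies and observe that, for such policies, the wear-in censoring collapses to a deterministic, arm-agnostic multiplier on each round's expected reward; maximizing the total expected reward then reduces to maximizing $\mu_j$.

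First I would simplify the reward model under the wear-in-only assumption. Since $Z_{t,j} > N$ for all $t$ and $j$ while the window count $\sum_{k=\max(t-N,0)}^t \indicator[J_k=j]$ can never exceed the window size, the upper (wear-out) indicator in Equation~\ref{eq:feedback} holds vacuously, leaving
\begin{equation*}
X_{t,j} = R_{t,j}\,\indicator\Bigl[\,\textstyle\sum_{k=\max(t-N,0)}^t \indicator[J_k=j] \ge D_{t,j}\,\Bigr].
\end{equation*}
Now fix any constant policy that plays a single arm $j$ in every round, so that $J_k=j$ for all $k$. Then the window count equals a deterministic quantity $c(t)$ (the number of rounds in the current window), which is nondecreasing in $t$, does not depend on the identity of $j$, and reaches a value at least $a$ once the window has filled, i.e.\ after an initial transient of at most $N+1$ rounds.

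Next I would evaluate the per-round expected reward. Because $R_{t,j}$ and $D_{t,j}$ are independent draws (from $\xi_j$ and from the common law $\xi^D_j=\xi^D$) and $c(t)$ is deterministic, $\mathbb{E}[X_{t,j}] = \mathbb{E}[R_{t,j}]\,\mathbb{P}(c(t)\ge D_{t,j}) = \mu_j\,p_t$, where $p_t := \mathbb{P}(D\le c(t))$ for $D\sim\xi^D$. The key point is that $p_t$ depends only on $t$ (through $c(t)$) and on $\xi^D$, hence is identical for every arm, and $p_t=1$ as soon as $c(t)\ge a$. Summing over rounds, the expected total reward of the constant policy playing $j$ factorizes as $\sum_{t=1}^T \mathbb{E}[X_{t,j}] = \mu_j\sum_{t=1}^T p_t$, with $\sum_{t=1}^T p_t>0$.

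Finally, since $\sum_{t=1}^T p_t$ is a strictly positive constant shared by all arms, the constant policy maximizing the expected total reward is exactly the one with the largest $\mu_j$, namely the arm attaining $\mu^*$, which establishes the claim. The one step I would treat most carefully is the reduction of the stochastic window count to the deterministic, arm-independent $c(t)$: this is precisely what constant play provides, and the assumption $\xi^D_j=\xi^D$ is what makes the censoring probability $p_t$ common across arms. Without that assumption the argument still closes for all rounds past the wear-in transient (where $p_t=1$ regardless of the arm's wear-in law), so only the $\mathrm{O}(N)$ transient rounds would need a separate, lower-order comparison.
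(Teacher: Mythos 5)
Your proof is correct, and it takes a genuinely different --- and substantially more rigorous --- route than the paper's own proof, which is a three-sentence qualitative argument: every arm that any policy plays suffers wear-in losses, hence ``there is no gain in playing multiple arms if none of them can beat the optimal arm,'' and so the best constant policy plays the best arm. You instead restrict attention to constant policies (exactly the scope of the lemma as stated) and make the comparison quantitative: under constant play the window count $c(t)$ is deterministic, nondecreasing and arm-agnostic, so the censoring probability $p_t=\mathbb{P}(D\le c(t))$ is a sequence common to all arms (here the paper's standing simplification $\xi^D_j=\xi^D$ does real work, and you correctly flag both this and the $\mathrm{O}(N)$-transient issue that arises without it), and the cumulative expected reward factorizes as $\mu_j\sum_{t=1}^T p_t$, which is maximized by the arm attaining $\mu^*$. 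What your route buys is a verifiable computation in place of intuition. What the paper's looser argument gestures at --- and your factorization, as written, does not cover --- is optimality against \emph{non-constant} policies, which is what the main text invokes when it calls this benchmark ``optimal even with the wear-in effect.'' That stronger claim is in fact a one-line extension of your argument: under any policy, the window count of the arm played at round $t$ is pointwise at most $c(t)$, and since $D_{t,J_t}$ is drawn fresh, independently of the history and of $R_{t,J_t}$, the conditional per-round expected reward of any policy is at most $\mu_{J_t}\,\mathbb{P}(D\le c(t))\le \mu^* p_t$, which the constant best-arm policy attains exactly; summing over $t$ shows it dominates all policies, not just constant ones. Adding that sentence would strengthen your proof to cover the optimality claim the paper actually needs, at essentially no extra cost.
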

\begin{proof}
There is loss of rewards due to wear-in for each arm any policy may chose to play. Hence there is no gain in playing multiple arms if none of them can beat the optimal arm. Therefore, the optimal constant policy in expectation is to play the best arm for all rounds in $T$. 
\end{proof}

The above benchmark policy is sub-optimal for the setting of the Section~\ref{sec:wiwo}. Since playing the same arm continuously would make it \emph{worn-out} quickly leading to no reward being accrued. In Section~\ref{sec:wiwo}, we introduced a benchmark policy based on the notion of the \emph{compound arms}. In the following lemma we propose another benchmark policy for this setting and prove it is optimal. 

\begin{lemma}\label{lemma:wiwo_benchmark}
Under both wear-in and wear-out effects with $0<a<  \nicefrac{N}{2}<b < N$, the benchmark policy of playing top two arms, $i^*=\max_{j\,\in\,\arms}\mu_j$ and $i^{**}=\max_{j\,\in\,\arms,\,j\neq i^*}\mu_j$ alternatively is optimal. Alternatively, playing any other arm or playing in any other order cannot improve have better expected cumulative reward.
\end{lemma}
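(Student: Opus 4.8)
The plan is to reduce the comparison of benchmark policies to a question about the \emph{window counts} of the played arms, and then to argue structurally that the best admissible count profile is the balanced two-arm schedule. First I would write the expected reward of any benchmark as $\mathbb{E}[\sum_t X_{t,J_t}] = \sum_t \mu_{J_t}\,\mathbb{P}[D_{t,J_t}\le c_t \le Z_{t,J_t}]$, where $c_t := \sum_{k=\max(t-N,0)}^t \indicator[J_k=J_t]$ is the number of plays of the current arm in the length-$N$ history. Since $R$ is independent of $(D,Z)$ and the supports satisfy $D\in\{0,\dots,a\}$ and $Z\in\{b,\dots,N\}$, the factor $\mathbb{P}[D\le c_t\le Z]$ equals $1$ exactly when $c_t\in[a,b]$ and is strictly smaller otherwise; moreover, because a benchmark must accrue reward for \emph{every} admissible pair $(\xi^D,\xi^Z)$ with these supports, the only counts guaranteeing reward are those in $[a,b]$ (a count $c>b$ is defeated by $Z\equiv b$ and a count $c<a$ by $D\equiv a$). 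Thus the benchmark's task becomes: keep the count of each played arm inside $[a,b]$ while weighting rounds toward high-mean arms.

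Next I would rule out degenerate schedules and exhibit feasibility of alternation. Playing a single arm throughout any window drives its count up to $\approx N$, which exceeds $b$ (as $b<N$), so the wear-out indicator fails and no reward is guaranteed; hence any reward-accruing benchmark must spread its plays over at least two distinct arms inside every length-$N$ window. Using the identity that, within any window, the counts of all arms sum to the window length, together with the assumption $a<\nicefrac N2<b$, the schedule that alternates two arms keeps both counts at $\approx\nicefrac N2\in(a,b)$, so it accrues full reward in every round at rate $\tfrac12(\mu_{i^*}+\mu_{i^{**}})$; crucially, $a<\nicefrac N2<b$ is exactly the condition making this balanced schedule feasible.

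Finally I would establish optimality in two steps. (i) An exchange argument: in any feasible schedule, replacing a played arm by an unused arm of larger mean while preserving the count profile never decreases reward, so the two arms in play should be the top two $i^*,i^{**}$; this is the analogue of Lemma~\ref{lemma:wi_benchmark}. (ii) The split should be balanced: since $\nicefrac N2$ is the count value lying in $[a,b]$ for \emph{every} admissible $(a,b)$ with $a<\nicefrac N2<b$, a benchmark that must succeed without knowing the priming parameters is pinned to count $\nicefrac N2$ for each played arm, which forces the equal (alternating) split; combined with (i) this gives the stated benchmark.

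I expect step (ii) to be the main obstacle. For a fixed, fully known $(a,b)$ there is slack around $\nicefrac N2$, and an asymmetric two-arm schedule that skews frequency toward $i^*$ can still keep both counts inside $[a,b]$ and would then beat the balanced schedule; the equal split is forced only once one insists on robustness to the unknown $(a,b)$ and unknown $\xi^D,\xi^Z$. Making this precise, while simultaneously controlling the integer boundary effects of the length-$N$ window (so that the count of each played arm is pinned rather than fluctuating from window to window), is the delicate part. I would handle it by a window-by-window exchange argument that transports an arbitrary admissible schedule toward the alternation of $i^*,i^{**}$ without ever decreasing its worst-case reward, which yields (i) and (ii) together.
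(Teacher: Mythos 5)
Your skeleton is essentially the paper's own proof, which is much terser than your write-up: the paper (i) rules out single-arm play via wear-out, (ii) argues that since $a<\nicefrac{N}{2}$ at most two arms can be kept worn-in inside a length-$N$ window, so any schedule using three or more arms forfeits wear-in rewards, and (iii) observes that alternation keeps both window counts near $\nicefrac{N}{2}$, which lies in $[a,b]$ under every realization of $(\xi^D,\xi^Z)$, so neither effect ever triggers. Your reduction to window counts, your feasibility check for alternation, and your top-two exchange argument (the analogue of Lemma~\ref{lemma:wi_benchmark}) are explicit versions of exactly these steps.

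The step you flag as the main obstacle, however, is a genuine gap --- and it is a gap in the paper's proof as well, not just in your attempt: the paper concludes only that the optimal policy "should play with exactly two arms" and never compares the equal split against asymmetric two-arm schedules. Your worry is well-founded, because per-instance optimality of alternation is in fact false whenever there is slack. Concretely, take $N=6$, $a\le 2$, $b=5$ (these satisfy $0<a<\nicefrac{N}{2}<b<N$) and the periodic schedule $i^*\,i^*\,i^{**}$ repeated: every window of $N+1$ consecutive rounds then contains at most $5$ plays of $i^*$ and at least $2$ plays of $i^{**}$, so every count lies in $[D_{t,j},Z_{t,j}]$ surely, and the reward rate is $\tfrac{1}{3}(2\mu_{i^*}+\mu_{i^{**}})$, which strictly exceeds the alternating rate $\tfrac{1}{2}(\mu_{i^*}+\mu_{i^{**}})$ whenever $\mu_{i^*}>\mu_{i^{**}}$. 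So no argument --- yours or the paper's --- can close this step for the lemma as literally stated. Your proposed repair, demanding that the benchmark accrue reward for \emph{every} admissible $(\xi^D,\xi^Z)$ with $a<\nicefrac{N}{2}<b$, which pins each played arm's count to $\nicefrac{N}{2}$ and thereby forces the equal split, proves a different, minimax-type statement. That reading is defensible (the paper's benchmark is only assumed to know the means $\mu_j$, not the priming distributions), but you should present it as a reinterpretation of the lemma rather than a proof of it; under the instance-dependent reading suggested by the regret definition in Equation~(\ref{eq:regret_general}), the true optimal two-arm benchmark skews toward $i^*$ as far as $b$ and $N-a$ permit.
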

\begin{proof}
Since, just playing one arm continuously is sub-optimal, thus any non-trivial strategy would involve playing more than 1 arm. Also, as $a<\nicefrac{N}{2}$, then atmost 2 arms can be \emph{worn-in}, therefore any strategy involving more than 2 arms would lead to additional loss of rewards due to wearing-in. Finally since $b> \nicefrac{N}{2}$, therefore if the arms are being played alternatively then under no realizations of the priming effect distributions $\xi^Z$ and $\xi^D$ would the arms get \emph{worn-out}. Hence the optimal policy should play with exactly two arms.
\end{proof}

\subsection{EXPERIMENTS}\label{sec:experiments}
We run four experiments. In the first two experiments we compared our proposed algorithms \bie{} and \bieboth{} with other baseline algorithms namely \textit{AAE}~\cite{even2006action}, \textit{MOSS}~\cite{audibert2009minimax} and \textit{UCB}~\cite{auer2002finite}. In the third, we investigate the arm switching behavior of \ucb{}, showing how it can be suboptimal for certain input instances. In the fourth, we show how \bie{} performs as a function of the wear-in effect. In all these experiments, the cumulative regret curves plotted were averaged over 30 Monte Carlo runs. The bandit instances were generated randomly, unless otherwise noted.

We use a set up of $K=20$ arms and the reward distributions to be Bernoulli with randomly chosen means. $T=5000$. In the first experiment we consider only the wear-in setting with $N=10$ and $\xi^D\,\sim$ Uniform[0,N]. Figure~\ref{fig:8} shows that the standard stochastic multi armed bandit algorithms incur linear regret, whereas, \bie{} has a sub-linear regret. In the second experiment we consider both wear-in and wear-out effects with $N=10$, $\xi^D\,\sim$ Uniform[0,3] and $\xi^Z\,\sim$ Uniform[6,10]. Under this general priming setting with both the wear-in and the wear-out effects, only \bieboth{} has sub-linear regret (see Figure~\ref{fig:15}).
\begin{figure}
\centering
	\includegraphics[width=.9\columnwidth]{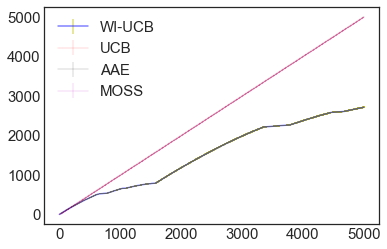}
\caption{Performance (cumulative regret) of \bie{} compared to other algorithms.}
\label{fig:8}
\end{figure}

\begin{figure}
\centering
	\includegraphics[width=.9\columnwidth]{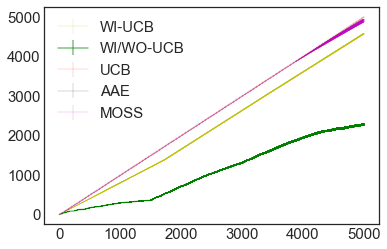}
\caption{Performance (cumulative regret) of \bieboth{} compared to other algorithms.}
\label{fig:15}
\end{figure}

We use a simple setup of $K = 30$ arms and set the reward distributions to be the Bernoulli with randomly chosen biases. The horizon length $T = 5000$. We then run \ucb under three different configurations. In the first, the bandit instance is run as is and there is an unique optimal arm. In the second, the number of optimal arms is increased to $3$, and in the third the number of optimal arms is increased to $7$. Figure~\ref{fig:7} shows the unnormalized counts of \emph{same arm plays} in the past $15$ plays. This was computed by checking how many times the current arm was also played in the past $15$ rounds. As expected, as the number of optimal arms increases, the counts of same arm plays decreases rapidly. This indicates that \ucb{} and other related algorithms may perform poorly in settings with priming.

\begin{figure}
\centering
	\includegraphics[width=.9\columnwidth]{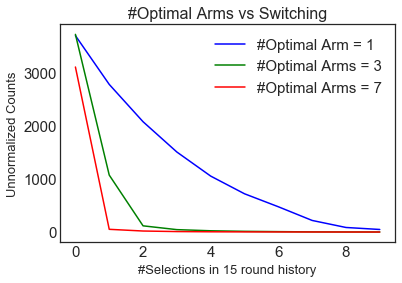}
\caption{Plot of unnormalized counts (y-axis) versus number of same arm plays in the past $15$ rounds by \ucb{} for three different settings.}
\label{fig:7}
\end{figure}

Now, we show the performance of \bie{} (Algorithm ~\ref{alg:expectation_known}) for varying levels of wear-in effect. The number of arms in this experiment is fixed at $10$. Wear-in effect is stochastic and is simulated using the absolute value normal distribution with means = $\{2,6,10,14\}$ and the standard deviation being proportional to the arm indices. The history window, $N = 20$ and the time horizon is $10000$. From Figure~\ref{fig:4}, we can observe that as the cumulative regret increases as $\expd$ is increased.
\begin{figure}
\centering
\includegraphics[width=.9\columnwidth]{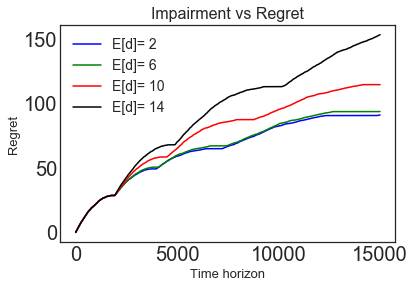}
\caption{Performance (cumulative regret) of \bie{} as the wear-in parameter is varied.}
\label{fig:4}
\end{figure}

\end{document}